\DeclareRobustCommand\onedot{\futurelet\@let@token\@onedot}
\def\@onedot{\ifx\@let@token.\else.\null\fi\xspace}
\newcommand*{\addFileDependency}[1]{
  \typeout{(#1)}
  \@addtofilelist{#1}
  \IfFileExists{#1}{}{\typeout{No file #1.}}
}
\definecolor{ourblue}{rgb}{0.368,0.507,0.71}
\definecolor{ourorange}{rgb}{0.881,0.611,0.142}
\definecolor{ourgreen}{rgb}{0.56,0.692,0.195}
\definecolor{ourred}{rgb}{0.923,0.386,0.209}
\definecolor{ourviolet}{rgb}{0.528,0.471,0.701}
\definecolor{ourbrown}{rgb}{0.772,0.432,0.102}
\definecolor{ourlightblue}{rgb}{0.364,0.619,0.782}
\definecolor{ourdarkolive}{rgb}{0.572,0.586,0.}
\definecolor{ourpurple}{rgb}{0.528,0.471,0.701}
\definecolor{ourdarkbrown}{rgb}{0.618,0.348,0.0816}
\definecolor{ourcyan}{rgb}{0.364,0.619,0.782}
\definecolor{ourgrey}{rgb}{0.75,0.75,0.75}
\definecolor{ourdarkred}{rgb}{0.67, 0.22, 0.07}
\definecolor{ourdarkorange}{rgb}{0.71, 0.49, 0.1}
\definecolor{ourdarkblue}{rgb}{0.27, 0.4, 0.58}
\definecolor{ourdarkgreen}{rgb}{0.41, 0.51, 0.15}
\definecolor{ourcyan2}{rgb}{0.125,0.722,0.804}
\definecolor{ourred2}{rgb}{0.863,0.184,0.047}
\definecolor{ouryellow2}{cmyk}{0,0.16,1.0,0.07}
\definecolor{ourviolet2}{cmyk}{0.55,0.56,0,0.47}
\definecolor{ourorange2}{cmyk}{0,0.46,0.89,0.11}
\newcommand{\hdottedrule}[3][0]{%
	\tikz[baseline]{\path[decoration={markings,
			mark=between positions 0 and 1 step 2*#3
			with {\node[fill, circle, minimum width=#3, inner sep=0pt, anchor=south west] {};}},postaction={decorate}]  (0,#1) -- ++(#2,0);}}
\newcommand{\hdashdotrule}[3][0]{
	\tikz[baseline=-#1]\draw[thick,dash dot, line width=0.2em] (0,0) -- (#2,0);
}
\definecolor{blue}{HTML}{4d71a6}
\definecolor{green}{HTML}{2e7647}
\definecolor{brown}{HTML}{6d5959}
\definecolor{orange}{HTML}{DE9102}
\definecolor{red}{HTML}{ff4e33}
\definecolor{sc1}{HTML}{aec0da}
\definecolor{sc2}{HTML}{7d9ac4}
\definecolor{sc3}{HTML}{4f75ac}
\definecolor{sc4}{HTML}{38547b}
\crefname{section}{Section}{Sections}
\crefname{appendix}{Supplementary}{Supplementary}
\crefname{figure}{Figure}{Figures}
\crefname{lem}{Lemma}{Lemmas}
\crefname{thm}{Theorem}{Theorems}
\crefname{asm}{Assumption}{Assumptions}
\crefname{cor}{Corrolary}{Corrolaries}
\crefname{thm}{Thm.}{Thms.}
\crefname{lem}{Lem.}{Lems.}
\crefname{cor}{Cor.}{Cors.}
\crefname{def}{Def.}{Defs.}
\crefname{exmp}{Ex.}{Exs.}
\crefname{section}{Sec.}{Secs.}
\crefname{subsection}{Subsec.}{Subsecs.}
\crefname{appendix}{Supp.}{Supps.}
\crefname{asm}{Assum.}{Assum.}
\definecolor{doi0}{HTML}{B2C4DB}
\definecolor{doi1}{rgb}{0.2982297551789076, 0.4433145009416194, 0.6528813559322034}
\definecolor{doi2}{rgb}{0.4494117647058823, 0.5717647058823528, 0.7484967320261439}
\definecolor{doi4}{rgb}{0.6188235294117648, 0.7035294117647058, 0.8258823529411765}
\definecolor{ouryellow}{HTML}{e09b24}
\def\legendfont{\scriptsize}
\newlength{\legendw}
\newlength{\barw}
\newcommand{\legend}{\legendfont
	{\color{doi4}\hdashrule[.5ex]{\legendw}{.2em}{}} \method{}$^4$ \quad 
	{\color{doi2}\hdottedrule[.5ex]{\legendw}{.2em}} \method{}$^2$ \quad 
	{\color{doi1}\hdashrule[.5ex]{\legendw}{.2em}{2pt 0.5pt}} \method{}$^1$ \quad 
	{\color{ourorange}\hdashrule[.5ex]{\legendw}{.2em}{}} SMODICE$^\dagger$\\ 
}
\newcommand{\legendtwo}{\legendfont
	{\color{doi4}\hdashrule[.5ex]{\legendw}{.2em}{}} \method{}$^4$ \quad 
	{\color{doi2}\hdottedrule[.5ex]{\legendw}{.2em}} \method{}$^2$ \quad
	{\color{doi1}\hdashrule[.5ex]{\legendw}{.2em}{2pt 0.5pt}} \method{}$^1$\quad 
	{\color{ourorange}\hdashrule[.5ex]{\legendw}{.2em}{}} SMODICE$^\dagger$\\ 
}
\definecolor{f1}{HTML}{9eb3d3}
\definecolor{f2}{HTML}{7d9ac4}
\definecolor{f3}{HTML}{5d81b5}
\definecolor{f4}{HTML}{476a9c}
\definecolor{f5}{HTML}{e09b24}
\newcommand{\legendfixed}{\legendfont
	{\color{f1}\hdashrule[.5ex]{\legendw}{.2em}{}} $0.0$ \quad 
	{\color{f2}\hdashrule[.5ex]{\legendw}{.2em}{4pt 1.5pt}} $0.4$ \quad 
	{\color{f3}\hdottedrule[.5ex]{\legendw}{.2em}}  $0.2$ \quad
	{\color{f4}\hdashdotrule[.7ex]{\legendw}{.2em}{}}  $0.8$ \quad 
	{\color{f5}\hdashrule[.5ex]{\legendw}{.2em}{}}  $1.0$ 
}
\newcommand{\legendskills}{\legendfont
	{\color{ourblue}\hdashrule[.5ex]{0.3em}{.2em}{}}
	{\color{ouryellow}\hdashrule[.5ex]{0.3em}{.2em}{}}
	{\color{ourgreen}\hdashrule[.5ex]{0.3em}{.2em}{}}
	{\color{ourviolet}\hdashrule[.5ex]{0.3em}{.2em}{}}
	{\color{ourred}\hdashrule[.5ex]{0.3em}
		{.2em}{}} \quad Skills \\
}
\newcommand{\legenddrl}{\legendfont
	{\color{doi4}\hdashrule[.5ex]{\legendw}{.2em}{}} \method{}$^4$ \quad 
	{\color{doi2}\hdottedrule[.5ex]{\legendw}{.2em}} \method{}$^2$ \quad 
	{\color{doi1}\hdashrule[.5ex]{\legendw}{.2em}{2pt 0.5pt}} \method{}$^1$ \quad 
	{\color{ourdarkblue}\hdashrule[.5ex]{\legendw}{.2em}{2pt 0.5pt 1pt 0.5pt}} \method{}$^{.5}$ \quad 
 {\color{ourorange}\hdashrule[.5ex]{\legendw}{.2em}{}} SMODICE$^\dagger$\\ 
}
\DeclareSymbolFont{extraup}{U}{zavm}{m}{n}
\DeclareMathSymbol{\varheart}{\mathalpha}{extraup}{86}
\DeclareMathSymbol{\vardiamond}{\mathalpha}{extraup}{87}
\newlength{\legendwenvs}
\newcommand{\legendenvs}{\legendfont
{\color{black}  $\clubsuit$} Walker2D
\hspace{\legendwenvs} {\color{ourred} $\varheart$}  HalfCheetah 
\hspace{\legendwenvs} {\color{ourred} $\vardiamond$} Hopper 
\hspace{\legendwenvs} {\color{black} $\spadesuit$} Ant

}
\definecolor{altcolor}{RGB}{104, 149, 158}
\def\1{\bm{1}}
\DeclareMathAlphabet{\mathsfit}{\encodingdefault}{\sfdefault}{m}{sl}
\SetMathAlphabet{\mathsfit}{bold}{\encodingdefault}{\sfdefault}{bx}{n}
\def\gD{{\mathcal{D}}}
\def\gI{{\mathcal{I}}}
\def\gO{{\mathcal{O}}}
\def\gS{{\mathcal{S}}}
\def\sR{{\mathbb{R}}}
\newcommand{\E}{\mathbb{E}}
\DeclareMathOperator*{\argmin}{arg\,min}
\theoremstyle{plain}
\newtheorem{thm}{Theorem}[section]
\newtheorem{lem}[thm]{Lemma}
\newtheorem{asm}[thm]{Assumption}
\theoremstyle{definition}
\theoremstyle{remark}
\newcommand{\method}{MyMethod\xspace}
\newcommand{\Df}{\mathrm{D}_f}
    \newcommand{\Dkl}{\mathrm{D}_\mathrm{KL}}
\newcommand{\numSeeds}{3\,}
\newcommand{\etaz}{\eta_{z}(s,a)}
\newcommand{\etae}{\eta_{\widetilde{E}}}
\newcommand{\dE}{d_{\widetilde{E}}}
\newcommand{\envhalfcheetah}{\textsc{HalfCheetah}}
\newcommand{\envwalker}{\textsc{Walker2D}}
\newcommand{\envhopper}{\textsc{Hopper}}
\newcommand{\envant}{\textsc{Ant}}
\newcommand{\envsolo}{\textsc{Solo12}}
\renewcommand{\method}{DOI}
\newif\ifcomments
\definecolor{darkblue}{HTML}{228BBB}
\definecolor{darkorange}{HTML}{E39F40}
\newif\ifmcolor
\newcommand{\cdiv}[1]{\ifmcolor{{\color{ourdarkblue} #1}}\else#1\fi}
\newcommand{\cdem}[1]{\ifmcolor{{\color{ourdarkorange} #1}}\else#1\fi}
\definecolor{purple}{RGB}{128,0,128}
\author[1,2]{\textbf{Marin Vlastelica}}
\author[1,]{\textbf{Jin Cheng}}
\author[1,2]{\textbf{Georg Martius}}
\author[1,2]{\textbf{Pavel Kolev}}
\affil[1]{Max Planck Institute for Intelligent Systems, Tübingen, Germany}
\affil[2]{University of Tübingen, Tübingen, Germany}
\affil[3]{ETH Zurich, Switzerland}
\title{Offline Diversity Maximization\\ Under Imitation Constraints}
\begin{document}

\maketitle

\vskip 0.3in

\begin{abstract}
    There has been significant recent progress in the area of unsupervised skill discovery, utilizing various information-theoretic objectives as measures of diversity.
    Despite these advances, challenges remain: current methods require significant online interaction, fail to leverage vast amounts of available task-agnostic data and typically lack a quantitative measure of skill utility.
    We address these challenges by proposing a principled offline algorithm for unsupervised skill discovery that, in addition to maximizing diversity, ensures that each learned skill imitates state-only expert demonstrations to a certain degree.
    Our main analytical contribution is to connect Fenchel duality, reinforcement learning, and unsupervised skill discovery to maximize a mutual information objective subject to KL-divergence state occupancy constraints.
    Furthermore, we demonstrate the effectiveness of our method on the standard offline benchmark D4RL and on a custom offline dataset collected from a 12-DoF quadruped robot for which the policies trained in simulation transfer well to the real robotic system.\footnote{Project website with videos: \href{https://sites.google.com/view/diversity-via-duality/home}{https://tinyurl.com/diversity-via-duality}}
\end{abstract}


\begin{figure*}[htbp]
    \centering
    \includegraphics[width=.9\linewidth]{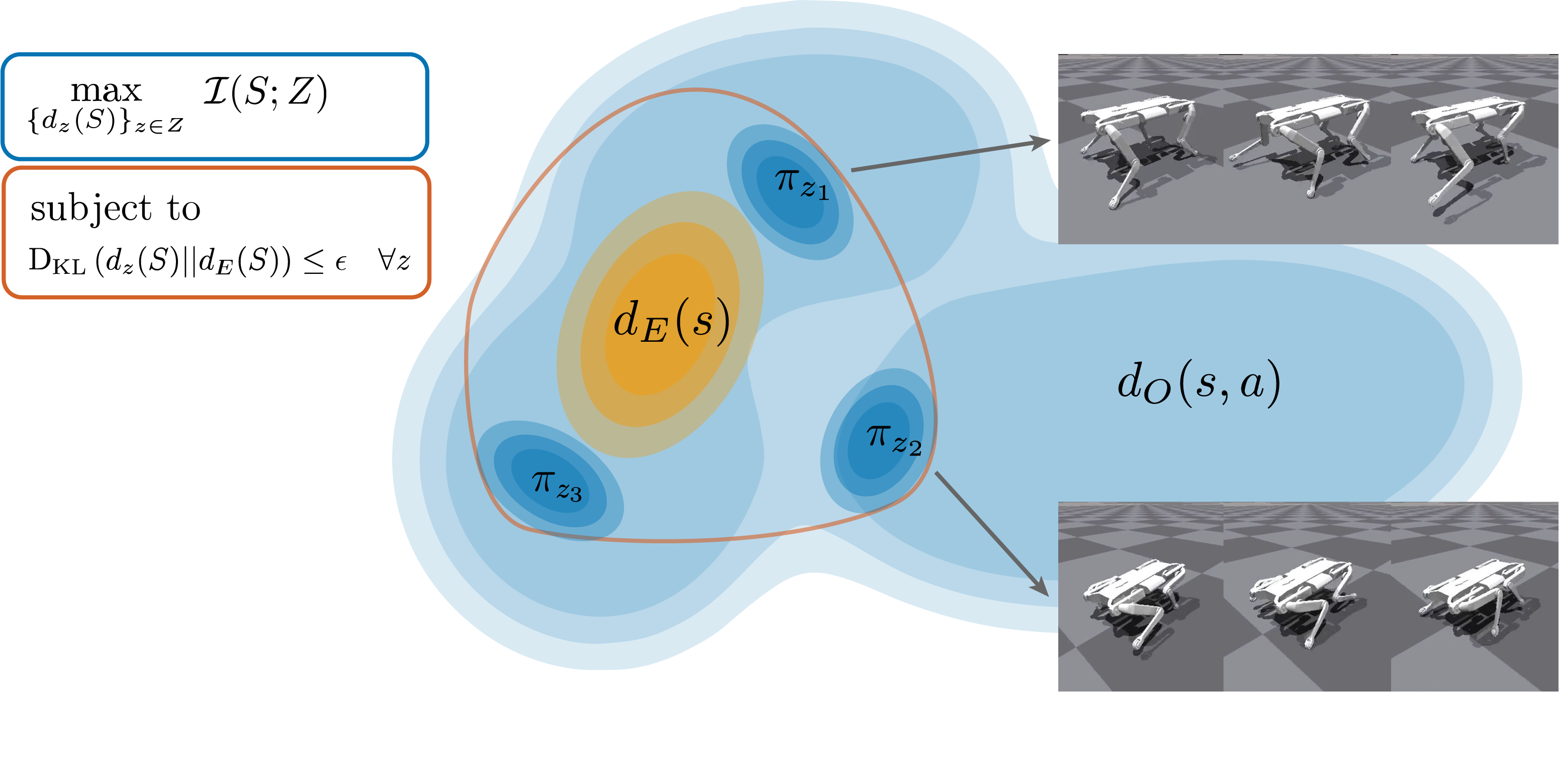}
    \vspace{-1em}
    \caption{{\small Diverse Offline Imitation (DOI) maximizes a variational lower bound on the mutual information between latent skills $z$ and states $s$ visited by associated skill-conditioned policies $\pi_z$, subject to a KL-divergence constraint to limit the deviation of the state occupancy $d_z(s)$ of each latent skill $z$ from that of an expert $d_E(s)$.}
    }
    \label{fig:main-figure}
\end{figure*}

\section{Introduction}

Recent advancements in reinforcement learning (RL) have included substantial progress in unsupervised skill discovery, aiming to empower autonomous agents with the capability to acquire a diverse set of skills directly from their environment, without relying on predefined human-engineered rewards or demonstrations.
These methods have the potential to revolutionize the way RL agents learn to solve complex tasks.
The growing interest in unsupervised skill discovery has led to various approaches, typically rooted in information-theoretic concepts, including empowerment~\citep{klyubin2005empowerment,mohamed2015variational,eysenbach19diayn}, information bottleneck~\citep{tishby99information,Goyal2019infobot,Kim2021IBOL} and information gain~\citep{houthooft2016vime,strouse2021disdain,parkL23dadsInfoGain}.
Despite these advancements, there remains a significant challenge. 
Current methods demand substantial online interaction with the environment, making exploration in high-dimensional state-action spaces inefficient.
Although~\citet{zahavy2023domino} introduced constraints to enhance skill performance and narrow the exploration space by incentivizing diverse skills to meet a certain utility measure, their approach does not eliminate the need for considerable online interaction with the environment.
Meanwhile, there have been significant recent advances in large-scale data collection \citep{RoboHive2020,walke2023bridgedata,rt2} and in the development of scalable and sample-efficient offline RL algorithms that leverage diverse behaviors of pre-collected experience.
However, these approaches struggle with well-known challenges, including off-policy evaluation and the out-of-distribution problem, which have been studied extensively in previous work~\citep{Levine2020SurveyRL, Prudencio2022SurveyRL}.

In this work, we address the aforementioned challenges by introducing a novel problem formulation and complementing it with the first principled \textit{offline} RL algorithm for unsupervised skill discovery that, in addition to maximizing diversity, ensures that each learned skill imitates state-only expert demonstrations to a certain degree.
More specifically, we consider a problem formulation with two datasets: a large one with diverse state-action demonstrations and another much smaller one with state-only expert demonstrations.
This setting is particularly valuable in robotics scenarios where expert demonstrations are limited and the domain of the expert may be different from that of the agent, such as in human demonstrations.
Another potential application is to enhance the realism of computer games by creating an immersive experience of interacting with non-player characters, each behaving in a slightly different style, while all partially imitating the behavior of a human expert.

We formulate the problem as a Constrained Markov Decision Process (CMDP)~\citep{altman1999constrained, cmdpblog} that seeks to maximize diversity through a mutual information objective, subject to Kullback-Leibler (KL) divergence state occupancy constraints ensuring that each skill imitates state expert demonstrations to a certain degree.
The resulting CMDP has convex objective and constraints, making the optimization problem intractable. 
We adopt a tractable relaxation approach consisting of an alternating scheme that maximizes a variational lower bound on mutual information, and to handle the constraints it applies Lagrange relaxation.
Our method, Diverse Offline Imitation (DOI), overcomes the off-policy evaluation by leveraging the Fenchel-Rockafellar duality in RL~\citep{nachum2020reinforcement,kim2022demodice,ma2022smodice} to connect a dual optimal value solution (computed using offline samples) with primal optimal state-action occupancy ratios.
These ratios serve as importance weights for offline training of a skill-conditioned policy, skill-discriminator, KL-divergence estimators, and Lagrange multipliers.
We demonstrate the effectiveness of our method on the standard offline benchmark D4RL~\citep{fu2020d4rl} and on a custom offline dataset collected from a 12-DoF quadruped robot Solo12~\citep{leziart2021implementation}.
In addition, we show that \method{} trained on simulation data transfers well to a real robot system.

\section{Related Work}

In the context of skill discovery \citet{achiam2018valor} and \citet{campos2020edl} showed that methods like DIAYN~\citep{eysenbach19diayn} can struggle to learn large numbers of skills and have a poor coverage of the state space. 
\citet{strouse2021disdain} observed that when a novel state is visited, the discriminator lacks sufficient training data to accurately classify skills, which results in a low intrinsic reward for exploration.
They address this by introducing an information gain objective (involving an ensemble of discriminators) as a bonus term.
\citet{kimPK21ibol} gave a skill discovery approach based on an information bottleneck that leads to disentangled and interpretable skill representations.
\citet{parkCKLK22lsd, parkLLA22newLSD} proposed a Lipschitz-constrained skill discovery method based on a distance-maximizing and controllability-aware distance function to overcome the bias toward static skills and to allow the agent to learn complex and far-reaching behaviors. 
\citet{sharma2019dads} developed a method that simultaneously discovers predictable skills and learns their dynamics.
In a follow-up work, \citet{parkL23dadsInfoGain} addresses the problem of errors in predictive models by learning a transformed MDP, whose action space contains only easy to model and predictable actions.
These works provide RL algorithms for unsupervised skill discovery that require \emph{online} interaction with the environment and do not impose utility measures on the learned skills.
In contrast, \method{} gives a principled \emph{offline} algorithm for maximizing diversity under imitation constraints.

A large body of research has focused on successor features~\citep{dayan1993successor, barreto2016successor}, a powerful technique in RL for transfer of knowledge across tasks by capturing environmental dynamics, particularly promising for skill discovery when coupled with variational intrinsic motivation~\citep{gregor2016vic,barreto2018transfer,hansen20visr} to enhance feature controllability, generalization, and task inference.
In contrast to our work, these approaches do not impose performance constraints on the learned skills.
\citet{zahavy2023domino} cast the task of learning diverse skills, each achieving a near-optimal performance with respect to a given reward, into a constrained MDP setting with a physics-inspired diversity objective based on a minimum $\ell_2$ distance between the successor features of distinct skills. 
However, this approach requires significant \emph{online} interaction with the environment to learn the skills.

Numerous practical algorithms for offline RL have been proposed~\citep{Levine2020SurveyRL, Prudencio2022SurveyRL}, including methods based on advantage-weighted behavioral cloning~\citep{nair2020awac,wang2020critic}, conservative strategies to stay close to the original data distribution~\citep{kumar2020conservative,cheng2022atac} and using only on-data samples~\citep{kostrikov2021iql,xu2023sql}.
While these methods excel at learning a policy that maximizes a fixed reward, they are not directly applicable in our setting, which has a non-stationary reward that depends on: i) the log-likelihood of a skill discriminator, and ii) Lagrange multipliers.
In addition, these techniques cannot be used to i) train a skill discriminator and ii) estimate a KL divergence offline.

Naive importance sampling approaches for off-policy estimation are known to suffer from unbounded variance in the infinite horizon setting, a problem known in the literature as ``the curse of horizon''.
\citet{Liu18breakingCurseOfHorizonOffPolicy,Mousavi2020offPolicyEstimation} addressed this challenge by providing theoretical foundations and a principled off-policy algorithm, using a backward Bellman operator, that avoids exploding variance by applying importance sampling to state-visitation distributions, and by providing practical solutions in Reproducing Kernel Hilbert Spaces.
An alternative research direction in off-policy estimation, referred to as ``Distribution Correction Estimation (DICE)'', has introduced innovative techniques, with \citet{nachum2019dualdice} mitigating variance with importance sampling, \citet{nachum2019algaedice} enabling policy gradient from off-policy data without importance weighting, \citet{kim2022demodice} stabilizing offline imitation learning with imperfect demonstrations, \citet{zhang2020gradientdice} improving density ratio estimation, \citet{dai2020coindice} providing high-confidence off-policy evaluation.
Subsequently, \citet{Xu2021SBAC} applied this approach to offline RL and demonstrated its effectiveness in continuous control tasks.
Our work uses a DICE-based off-policy approach similar to OptiDICE~\citep{lee2021optidice,lee2022coptidice} for estimating importance ratios, while considering a constrained formulation with a mutual information objective and KL-divergence imitation constraints.

\section{Preliminaries}

We utilize the framework of Markov decision processes (MDPs)~\citep{puterman2014markov}, where an MDP is defined by the tuple
$(\mathcal{S}, \mathcal{A}, \mathcal{R}, \mathcal{P}, \rho_0, \gamma)$ denoting the state space, action space, reward mapping $\mathcal{R}: \mathcal{S}\times \mathcal{A} \mapsto \sR$,
stochastic transition kernel $\mathcal{P}(s^{\prime}|s,a)$, initial state distribution $\rho_0(s)$ and discount factor $\gamma$.
A policy $\pi: \mathcal{S} \mapsto \Delta(\mathcal{A})$ defines a probability distribution over the action space $\mathcal{A}$ conditioned on the state, where $\Delta(\cdot)$ stands for the probability simplex.

Given a policy $\pi$, the corresponding state-action occupancy measure is defined by
\[
d^{\pi}(s,a):=(1-\gamma)\sum_{t=0}^{\infty}\gamma^{t}\mathrm{Pr}\big[s_{t}=s,a_{t}=a \,|\, s_{0}\sim\rho_{0} ,a_{t}\sim\pi(\cdot|s_{t}),s_{t+1}\sim \mathcal{P}(\cdot|s_{t},a_{t})\big]
\]
and its associated state occupancy $d^{\pi}(s)$ is given by marginalizing over the action space $\sum_{a\in\mathcal{A}} d^{\pi}(s,a)$.

In the skill discovery setting, $z\sim p(Z)$ denotes a fixed latent skill on which we condition a policy $\pi_z: S \times Z \mapsto \Delta(\mathcal{A})$.
We will treat $p(Z)$ as a categorical distribution over a discrete set $Z$ of $|Z|$ many distinct indicator vectors in $\mathbb{R}^{|Z|}$.
The skill-conditioned policy $\pi_z$ induces a state occupancy denoted by $d_z(s):=d^{\pi_z}(s)$, and when it is clear from the context we will refer to $d_z(s)$ as a ``skill''.

We consider an offline setting with access to the following datasets:
i) $\mathcal{D}_{E}$ sampled from an expert state occupancy $d_{E}(S)$; and 
ii) $\mathcal{D}_{O}$ sampled from a state-action occupancy $d_O(S,A)$ generated by a mixture of behaviors.
Similar to \citet{ma2022smodice}, our analysis makes the following assumption, which requires that the offline state occupancy $d_O$ sufficiently covers the expert's state occupancy $d_E$, a prerequisite for successful imitation learning.
Although this assumption is not required in practice, it ensures well-defined state occupancy measures (i.e., avoiding division by zero).
\begin{asm}[Expert coverage]\label{asm:base}
    We assume that $d_E(s)>0$ implies $d_O(s)>0$.
\end{asm}

\section{Method}\label{sec:Method}

Given an expert and a coverage dataset as above, we aim to solve \textit{offline} the constrained optimization problem 
\begin{eqnarray}
\max_{\{d_z(S)\}_{z\in Z}} & \gI(S;Z)\label{eq:constrained-problem-MI}\\
\text{subject to} & \Dkl \left(d_{z}(S)||d_{E}(S)\right)\leq\epsilon & \forall z,\label{eq:constrained-problem-constraint}
\end{eqnarray}
where $\gI(S;Z)$ denotes the mutual information between states and skills.
The identity $\gI(S;Z)=\mathbb{\mathbb{E}}_{p(z)}\mathrm{KL}(d_{z}(S)||\mathbb{E}_{z^{\prime}}d_{z^{\prime}}(S))$ shows an important geometric perspective that maximizing mutual information is equivalent to finding a set of $|Z|$ skills whose state occupancies $d_z(S)$ correspond as points on a probability simplex such that these points are positioned on the boundary of an ellipsoid and the pairwise distance between each point and the ellipsoid center is maximized~\citep{zahavy21rewardIsEnough, eysenbach22geometry}.

Henceforth, we shall make use of color coding to highlight the {\color{ourdarkblue} diversity} signal in blue and the {\color{ourdarkorange} imitation} signal in orange.
The preceding problem formulation and our algorithmic framework can be easily extended to capture:
i) objectives in \eqref{eq:constrained-problem-MI} that combine conditional mutual information (c.f. DADS in \citep{sharma2019dads}) and information gain (c.f. DISDAIN in~\citep{strouse2021disdain}); and
ii) general $f$-divergence constraints in \eqref{eq:constrained-problem-constraint}, see~\citet{nachum2020reinforcement, ma2022smodice}.
We leave the study of these variants for future work. 

Since maximizing the mutual information is generally intractable, in line with previous work~\citep{eysenbach19diayn} we assume that the latent skills are sampled uniformly at random, i.e., $p(z)=\frac{1}{|Z|}$, and as a trackable surrogate we consider instead the following variational lower bound
\begin{equation}\label{eq:VarLB}
\mathcal{I}\left(S;Z\right)\geq\mathbb{E}_{p(z),d_{z}(s)}\left[\log\cdiv{ q(z|s)}\right]+\mathcal{H}\left(p(z)\right)=\sum_{z}\mathbb{E}_{d_{z}(s)}\left[\frac{\log\left(|Z|\cdiv{q(z|s)}\right)}{|Z|}\right].
\end{equation}
Here with $\cdiv{q(z|s)}$ we denote a skill-discriminator tasked with distinguishing between latent skills.

\citet{ma2022smodice} proposed an offline algorithm (SMODICE) that on input an expert dataset $\mathcal{D}_{E}\sim d_{E}(S)$ and a coverage dataset $\mathcal{D}_{O}\sim d_O(S,A)$ such that $\mathcal{D}_{E}\subset\mathrm{States}[\mathcal{D}_{O}]$, trains a policy $\pi_{\widetilde{E}}$ which optimizes the problem
\begin{equation}\label{eq:smodica-problem}
\min_{\pi}\Dkl\left(d^{\pi}(S)||d_{E}(S)\right),
\end{equation}
and outputs the associated expert ratios $\cdem{\etae(s,a)}=d_{\widetilde{E}}(s,a)/d_O(s,a)$ for every state-action pair $(s,a)\in\mathcal{D}_O$, where $\dE(s,a)$ denotes the state-action occupancy induced by the recovered expert policy $\pi_{\widetilde{E}}$.

An important observation is that given the expert ratios $\cdem{\etae(s,a)}$, the state constraints~\eqref{eq:constrained-problem-constraint} can be relaxed to constraints with respect to the recovered expert state-action occupancy $d_{\widetilde{E}}(s,a)$.
While in theory this relaxation restricts the imitation to the state-action occupancy of a specific expert, it also admits a simpler estimator (see \Cref{lem:opt-lambda}) that is more stable to compute, yields faster runtime performance in practice, and simultaneously provides enough capacity for diversity by increasing the level $\epsilon$.
More specifically, for each latent skill $z$ we replace the state constraint~\eqref{eq:constrained-problem-constraint} with the following state-action constraint
\begin{equation}\label{eq:constrained-kl-state-action}
\Dkl\left(d_{z}(S,A)||\dE(S,A)\right)\leq\epsilon.
\end{equation}

We focus on a reduction of CMDPs to MDPs using gradient-based techniques, known as Lagrangian methods~\citep{borkar2005actor,bhatnagar2012online,tessler2018reward}. 
In contrast to prior work on CMDP, which has focused primarily on linear objectives and constraints, we consider the nonlinear setting with convex objectives and constraints.
More specifically, we seek to maximize the right-hand side of \cref{eq:VarLB} subject to \cref{eq:constrained-kl-state-action}. 
Solving this problem is equivalent to
\begin{equation}\label{eq:lagrange-relaxation}
\max_{{d_{z}(s,a)\atop q(z|s)}}\min_{\lambda\geq0}\sum_{z}\mathbb{E}_{d_{z}(s)}\left[\frac{\log\left(|Z|\cdiv{q(z|s)}\right)}{|Z|}\right]+\sum_{z}\lambda_{z}\left[\epsilon-\Dkl\left(d_{z}(S,A)||\dE(S,A)\right)\right],
\end{equation}
where with $\lambda_z$ we denote the Lagrange multiplier corresponding to latent skill $z$.

\subsection{Approximation Scheme}\label{sec:alt-opt}

\begin{figure*}[htbp]
    \centering
    \includegraphics[width=0.85\linewidth]{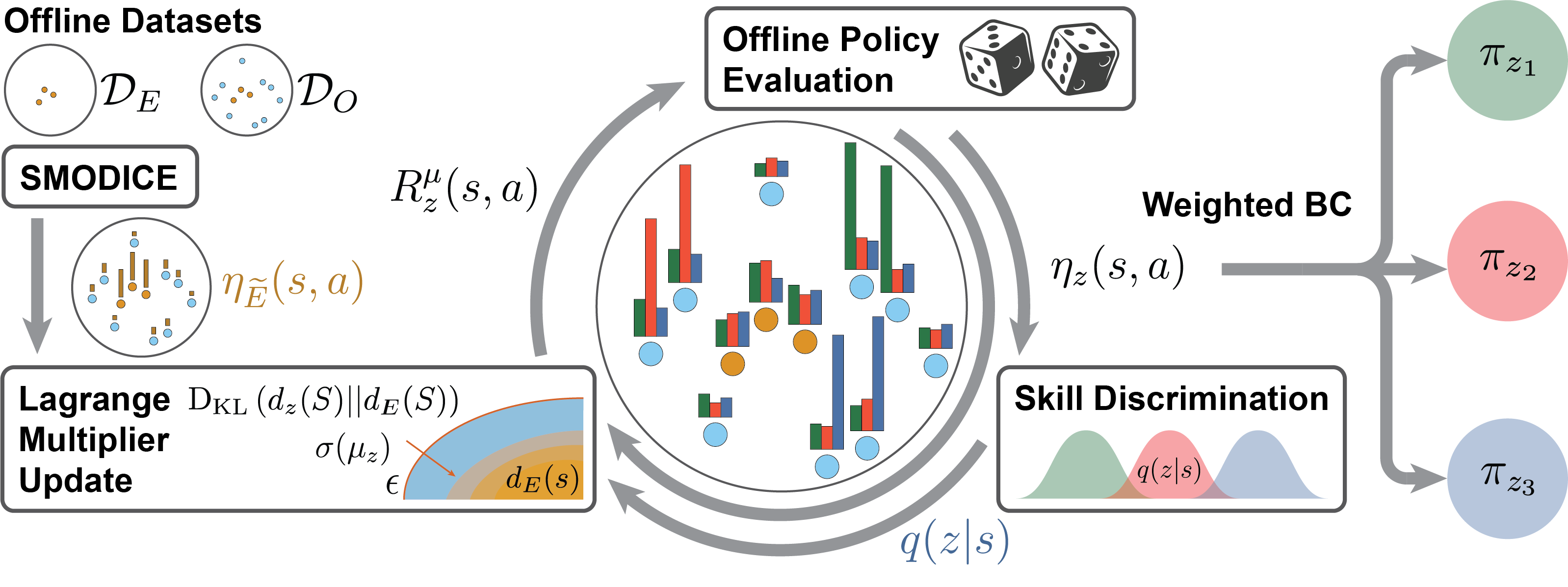}
    \caption{{\small Illustration of \Cref{alg:doi}. 
    We compute expert importance ratios $\cdem{\etae(s,a)}$ by running SMODICE on the offline datasets $\mathcal{D}_{E}$ and $\mathcal{D}_{O}$.
    These expert ratios are then used in the alternating scheme described in \cref{sec:alt-opt} to obtain the importance ratios $\etaz$ (with support in $\mathcal{D}_{O}$) for each skill $z$.
    Specifically, the skill-ratios $\etaz$ are computed by a DICE-like offline policy evaluation algorithm on input a reward $R_z^\mu(s,a)$ that balances skill diversity (skill-discriminator $\cdiv{q(z \vert s)}$) and expert imitation (importance ratios $\cdem{\etae(s,a)}$). 
    }}
    \label{fig:method}
\end{figure*}

We use a popular heuristic, known in the literature as \textit{alternating optimization}, to approximately compute a local optimum of Problem \eqref{eq:lagrange-relaxation}.
More precisely, the method alternates between optimizing each model while holding all others fixed, and iteratively refines the solution until convergence is reached or a stopping criterion is met.
Furthermore, as we can guarantee in practice that the Lagrange multipliers $\lambda$ are always positive, we consider Problem~\eqref{eq:lagrange-relaxation} with $\lambda>0$, that is
\begin{equation}\label{eq:objective}
\max_{{d_{z}(s,a)\atop q(z|s)}}\min_{\lambda>0}\sum_{z}\lambda_{z}\Big\{\epsilon+\mathbb{E}_{d_{z}(s,a)}\left[R_{z}^{\lambda}(s,a)\right]-\Dkl\left(d_{z}(S,A)||d_{O}(S,A)\right)\Big\},
\end{equation}
where
\begin{equation}\label{eqn:reward}
R_{z}^{\lambda}(s,a):=
\underbrace{\frac{1}{\lambda_{z}}}_{\text{Constraint Violation}}\underbrace{\frac{\log\left(\cdiv{q(z|s)}|Z|\right)}{|Z|}}_{\text{Skill Diversity}}+\underbrace{\vphantom{\frac{Z}{\|\lambda_z}}\log\cdem{\etae(s,a)}}_{\text{Expert Imitation}}.
\end{equation}
The reward in~\eqref{eqn:reward} is derived in \cref{app:sec:Lagrange} and relies on the following equality (see \cref{app:sec:SA-KLdiv}) 
$\mathrm{D}_{\mathrm{KL}}\big(d_{z}(S,A)||\dE(S,A)\big)=\mathrm{D}_{\mathrm{KL}}\big(d_{z}(S,A)||d_{O}(S,A)\big)-\mathbb{E}_{d_{z}(s,a)}\big[\log(\dE(s,a)/d_{O}(s,a))\big]$ and the definition of $\cdem{\etae(s,a)}=\dE(s,a)/d_{O}(s,a)$.

Intuitively, the reward $R_{z}^{\lambda}(s,a)$ balances between diversity and KL-closeness to the expert state-action occupancy.
The Lagrange multiplier $\lambda_z$ scales down the log-likelihood of the skill-discriminator $\cdiv{q(z|s)}$, effectively reducing the diversity signal, when the state-action occupancy $d_z(S,A)$ violates the KL-divergence constraint \eqref{eq:constrained-kl-state-action}, and vice versa.
Each term in the reward \eqref{eqn:reward} involves a separate optimization procedure, which will be described in the next section.

\subsection{Approximation Phases}
Using the alternating optimization scheme, \Cref{alg:doi} decomposes into the following three optimization phases.
In \textsc{Phase} 1, we train a value function $V_{z}^{\star}$, ratios $\eta_z(s,a)$ and a skill-conditioned policy $\pi_z$. 
In \textsc{Phase} 2, we train a skill-discriminator $\cdiv{q(z|s)}$.
Then in \textsc{Phase} 3, we compute a KL constraint estimator $\phi_z$ and update accordingly the Lagrange multipliers $\lambda_z$.
In addition, we perform a preprocessing phase to compute the expert ratios $\cdem{\etae(s,a)}$ by invoking the SMODICE algorithm.

\subsubsection{Phase 1}

With fixed skill-discriminator $\cdiv{q(z|s)}$ and Lagrange multipliers
$\lambda>0$, Problem~\eqref{eq:objective} becomes
\begin{equation}
\max\limits _{\substack{\{d_{z}(s,a)\}_{z\in Z}}
}\sum_{z}\lambda_{z}\Big\{\mathbb{E}_{d_{z}(s,a)}\left[R_{z}^{\lambda}(s,a)\right]-\Dkl\left(d_{z}(S,A)||d_{O}(S,A)\right)\Big\},
\end{equation}
or equivalently for every skill $z$:
\begin{eqnarray}\label{eq:LB_fixed_discr_offline}
\max\limits _{\substack{d_{z}(s,a)\geq0}
}&\mathbb{E}_{d_{z}(s,a)}\left[R_{z}^{\lambda}(s,a)\right]-\mathrm{D}_{\mathrm{KL}}\left(d_{z}(S,A)||d_{O}(S,A)\right)&\nonumber\\
\text{subject to} &\sum_{a}d_{z}(s,a)=(1-\gamma)\rho_{0}(s)+\gamma\mathcal{T}d(s)&\forall s,
\end{eqnarray}
where we denote with $\mathcal{T}$ the transition operator: $\mathcal{T}d(s^{\prime})=\sum_{s,a}\mathcal{P}(s^{\prime}|s,a)d(s,a)$.

\begin{asm}[Strict Feasibility]\label{asm:strict_feasibility}
    We assume there exists a solution such that the constraints \eqref{eq:LB_fixed_discr_offline} are satisfied and $d(s,a)>0$ for all states-action pairs $(s,a)\in\mathcal{S}\times\mathcal{A}$.
\end{asm}

Using Lagrange duality, \cref{asm:strict_feasibility} (which implies strong duality) and the Fenchel conjugate (see \cref{app:FenchelConjugate}), \citet[Sec. 6]{nachum2020reinforcement} and \citet[Theorem 2]{ma2022smodice} showed that Problem~\ref{eq:LB_fixed_discr_offline} shares the same optimal value as the following optimization problem
\begin{equation}\label{eq:Vstar}
V^{\star}=\argmin_{V(s)}  (1-\gamma)\mathbb{E}_{s\sim\rho_{0}}\left[V(s)\right]
+ \log\mathbb{E}_{d_O(s,a)}\exp\left\{ R_{z}^{\lambda}(s,a)+\gamma\mathcal{T}V(s,a)-V(s)\right\},
\end{equation}
where $\mathcal{T}V(s,a):=\mathbb{E}_{\mathcal{P}(s^{\prime}|s,a)}V(s^{\prime})$.
Moreover, the primal optimal solution is given by
\begin{equation}\label{eq:dopt}
\etaz:=\frac{d_{z}^{\star}(s,a)}{d_{O}(s,a)}=\mathrm{softmax}_{d_{O}(s,a)}\left(R_{z}^{\lambda}(s,a)+\gamma\mathcal{T}V_{z}^{\star}(s,a)-V_{z}^{\star}(s)\right),
\end{equation}
where $\mathrm{softmax}_{p(x)}(g(x))=\exp\{g(x)\}\Big/\mathbb{E}_{p(x^{\prime})}[\exp\{g(x^{\prime})\}]$.
These ratios $\eta_z(s,a)$ are then used to design an offline importance-weighted sampling procedure that, for an arbitrary function $f$, satisfies
\begin{equation}\label{imp-weight-proc}
\mathbb{E}_{p(z)}\mathbb{E}_{d_{z}^{\star}(s,a)}\big[f(s,a,z)\big]=\mathbb{E}_{p(z)}\mathbb{E}_{d_{O}(s,a)}\big[\eta_{z}(s,a) f(s,a,z)\big].
\end{equation}

Afterwards, the optimal skill-conditioned policy $\pi_z^{\star}$ is trained offline using a weighted behavioral cloning, which is obtained by setting $f(s,a,z)=\log(\pi_z(a|s))$ and maximizing the RHS of \cref{imp-weight-proc} over all skill-conditioned policies $\pi_z$.
In practice, gradient descent is used for optimization.

\subsubsection{Phase 2}
We now give an offline procedure for training a skill-discriminator $\cdiv{q(z | s)}$, which takes as input ratios $\eta_z(s,a)$ of a skill-conditioned policy $\pi^\star_z$.
The proof is presented in \cref{app:sec:train-discriminator}.

\begin{lem}\label{lem:skill-discriminator}
    Given ratios $\eta_{z}(s,a)$, using \cref{imp-weight-proc} applied with $f(s,a,z)=\log( \cdiv{q(z|s)} )$, we can compute offline an optimal skill-discriminator $\cdiv{q^{\star}(z|s)}$. 
    In particular, we optimize by gradient descent the following optimization problem
    $\max_{\cdiv{q(z|s)}}\mathbb{E}_{p(z)}\mathbb{E}_{d_{O}(s,a)}\left[\eta_{z}(s,a) \log\left(\cdiv{q(z|s)}\right)\right]$.
\end{lem}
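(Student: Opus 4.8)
The plan is to show that the only part of the variational bound \eqref{eq:VarLB} depending on the discriminator can be rewritten as an expectation over the offline occupancy $d_O$, and then maximized offline. First I would isolate, in \eqref{eq:VarLB}, the term containing $\cdiv{q(z|s)}$, namely $\sum_z p(z)\,\mathbb{E}_{d_z^\star(s)}\!\left[\log \cdiv{q(z|s)}\right]$; the entropy $\mathcal{H}(p(z))$ is independent of $q$ and drops out. An optimal skill-discriminator is therefore any maximizer of this cross-entropy over valid conditionals $q(\cdot|s)\in\Delta(Z)$. For each fixed $s$ this is a pointwise simplex-constrained problem, and a short Lagrange-multiplier (Gibbs-inequality) argument identifies the maximizer as the Bayesian posterior $\cdiv{q^\star(z|s)}\propto p(z)\,d_z^\star(s)$, which simultaneously certifies tightness of the bound.

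The hard part is that this objective is an expectation under the skill-conditioned state occupancy $d_z^\star(s)$, which cannot be sampled offline --- the dataset provides only draws from $d_O$. To remove this dependence I would apply the importance-weighted identity \eqref{imp-weight-proc} with the specific choice $f(s,a,z)=\log \cdiv{q(z|s)}$. The one subtlety to check is that $f$ is read as a function on $\mathcal{S}\times\mathcal{A}\times Z$ that is constant in the action argument; marginalizing $d_z^\star(s,a)$ over $a$ then gives $\mathbb{E}_{d_z^\star(s,a)}[\log \cdiv{q(z|s)}]=\mathbb{E}_{d_z^\star(s)}[\log \cdiv{q(z|s)}]$, so the left-hand side of \eqref{imp-weight-proc} is exactly the diversity objective isolated above.

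Substituting this $f$ into \eqref{imp-weight-proc} yields the key identity
\[
\mathbb{E}_{p(z)}\mathbb{E}_{d_z^\star(s)}\!\left[\log \cdiv{q(z|s)}\right]
=\mathbb{E}_{p(z)}\mathbb{E}_{d_O(s,a)}\!\left[\etaz\,\log \cdiv{q(z|s)}\right],
\]
which holds for every $q$; hence the two sides share the same argmax over $q$. The right-hand objective involves only offline samples $(s,a)\sim d_O$ and the precomputed ratios $\etaz$, so it can be optimized without environment interaction --- in practice $q$ is parametrized by a softmax network (automatically enforcing $q(\cdot|s)\in\Delta(Z)$) and trained by gradient ascent, converging to $\cdiv{q^\star(z|s)}$. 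The crux of the proof is thus the translation of the intractable $d_z^\star$-expectation into the offline importance-weighted expectation via \eqref{imp-weight-proc}; the posterior characterization and the softmax-gradient step are then routine.
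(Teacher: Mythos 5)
Your proposal is correct and follows essentially the same route as the paper: isolate the $q$-dependent term $\mathbb{E}_{p(z)}\mathbb{E}_{d_z^\star(s)}[\log q(z|s)]$, then apply the importance-weighted identity \eqref{imp-weight-proc} with $f$ constant in the action to obtain the offline $d_O$-weighted objective. The only cosmetic difference is that you certify optimality via a pointwise Gibbs-inequality argument identifying $q^\star(z|s)\propto p(z)\,d_z^\star(s)$, whereas the paper invokes its Discriminator Gradient lemma equating the log-likelihood gradient with the (negative) gradient of the KL to the true posterior --- two equivalent justifications of the same fact.
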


The skill-conditioned policy $\pi_z^{\star}$ (\textsc{Phase} 1) and the skill-discriminator $\cdiv{q^{\star}}$ (\textsc{Phase} 2), allow us to maximize \textit{offline} the variational lower bound in \cref{eq:VarLB} and thus skill diversity.
It remains to estimate possible constraint violations in \cref{eq:constrained-kl-state-action} and to update the Lagrange multipliers accordingly.

\subsubsection{Phase 3}
With fixed skill-discriminator $\cdiv{q^{\star}(z|s)}$ and skill-conditioned policy $\pi_{z}^{\star}(s)$, Problem~\eqref{eq:objective} reduces to
$\min_{\lambda>0}\sum_{z}\lambda_{z}\big[\epsilon-\mathrm{D}_{\mathrm{KL}}\big(d_{z}^{\star}(S,A)||\dE(S,A)\big)\big]$.
We optimize the Lagrange multipliers by gradient descent.
To this end, we now give an offline estimator of the KL-divergence term.
The proof is presented in \cref{app:sec:SA-KLdiv}.

\begin{lem}\label{lem:opt-lambda}
    Given skill-conditioned policy ratios $\eta_{z}(s,a)$ and expert ratios $\cdem{\etae(s,a)}$, using \cref{imp-weight-proc} applied with $f(s,a,z)=\log( \eta_z(s,a)/\cdem{\etae(s,a)} )$, we can compute offline an estimator of $\mathrm{D}_{\mathrm{KL}}\big(d_{z}^{\star}(S,A)||\dE(S,A)\big)$ which is given by $\phi_{z}:=\mathbb{E}_{d_{O}(s,a)}\big[\eta_{z}(s,a)\log(\eta_{z}(s,a)/\cdem{\etae(s,a)})\big]$.
\end{lem}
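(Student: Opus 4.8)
The plan is to reduce the claim to a single change-of-measure step, namely the importance-weighting identity \eqref{imp-weight-proc}, after rewriting the state-action KL divergence purely in terms of the two ratio functions $\etaz$ and $\cdem{\etae(s,a)}$. First I would write out the definition
\[
\Dkl\big(d_{z}^{\star}(S,A)\,\|\,\dE(S,A)\big)=\mathbb{E}_{d_{z}^{\star}(s,a)}\Big[\log\tfrac{d_{z}^{\star}(s,a)}{\dE(s,a)}\Big],
\]
and then cancel the common offline factor: since $\etaz=d_{z}^{\star}(s,a)/d_{O}(s,a)$ by the Phase 1 solution \eqref{eq:dopt} and $\cdem{\etae(s,a)}=\dE(s,a)/d_{O}(s,a)$ from the SMODICE preprocessing, dividing numerator and denominator by $d_{O}(s,a)$ yields the pointwise identity $d_{z}^{\star}(s,a)/\dE(s,a)=\etaz/\cdem{\etae(s,a)}$ on the support of $d_{O}$.

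Second, I would check that the logarithm and the divergence are well defined. Both solutions arise as softmax reweightings of $d_{O}$ (the skill ratio through \eqref{eq:dopt}, the expert ratio through SMODICE), so $\etaz>0$ and $\cdem{\etae(s,a)}>0$ exactly on $\mathrm{supp}(d_{O})$; consequently $d_{z}^{\star}$ and $\dE$ share the support of $d_{O}$, the absolute continuity $d_{z}^{\star}\ll\dE$ needed for a finite KL holds, and the ratio $\etaz/\cdem{\etae(s,a)}$ is finite and positive wherever the outer expectation places mass. Under \Cref{asm:base} this is automatic.

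Third comes the key step: apply \eqref{imp-weight-proc} for a fixed skill $z$ with the test function $f(s,a,z)=\log\big(\etaz/\cdem{\etae(s,a)}\big)$. Because $\etaz$ is precisely the density ratio $d_{z}^{\star}(s,a)/d_{O}(s,a)$, the expectation under the unknown occupancy $d_{z}^{\star}$ transfers to an $\etaz$-weighted expectation under the offline occupancy $d_{O}$,
\[
\mathbb{E}_{d_{z}^{\star}(s,a)}\Big[\log\tfrac{\etaz}{\cdem{\etae(s,a)}}\Big]=\mathbb{E}_{d_{O}(s,a)}\Big[\etaz\,\log\tfrac{\etaz}{\cdem{\etae(s,a)}}\Big]=:\phi_{z},
\]
which is exactly the stated estimator. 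Note that only the single-skill version of \eqref{imp-weight-proc} is needed; the outer $\mathbb{E}_{p(z)}$ plays no role since $z$ is held fixed.

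The computation is essentially immediate once the ratios are in place, so I do not expect a genuine obstacle; the only point deserving care — and the one I would state rather than grind through — is the support/absolute-continuity argument in the second step, ensuring the logarithm is finite $d_{O}$-almost everywhere and that the zero-weight regions (where $\etaz=0$) contribute nothing to $\phi_{z}$. Everything else is the routine cancellation of $d_{O}(s,a)$ and the substitution into \eqref{imp-weight-proc}.
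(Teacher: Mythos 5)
Your proposal is correct and follows essentially the same route as the paper: the paper's proof first splits $\Dkl(d_z^{\star}\|\dE)$ into $\Dkl(d_z^{\star}\|d_O)-\mathbb{E}_{d_z^{\star}}[\log\etae]$ via the structural identity $\log(d_z^{\star}/\dE)=\log(d_z^{\star}/d_O)-\log(\dE/d_O)$ and then importance-weights each term, which is just the split form of your one-step cancellation $d_z^{\star}/\dE=\etaz/\etae$ followed by the same change of measure. Your added remarks on positivity of the ratios match the paper's hypothesis $0<\etaz,\etae<\infty$ in its structural lemma and the remark following the statement in the main text.
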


We note that the ratios $\eta_{z}(s,a)$ and $\cdem{\etae(s,a)}$ are computed only on state-action pairs within the offline dataset $\mathcal{D}_O$.
Furthermore, in practice, we ensure that these ratios are strictly positive, so that the KL estimator $\phi_z$ is well defined and bounded.


\section{Algorithm}\label{sec:algo}

Our optimization method consists of three phases, each of which optimizes a specific model and fixes the remaining ones.
It is important to emphasize that in contrast to prior work, our problem formulation considers an optimization problem with constraints.
Furthermore, the reward function in \cref{eqn:reward} is non-stationary, since it depends on the bounded Lagrange multipliers that balance diversity ($\log\cdiv{q(z|s)}$) and expert imitation ($\log\cdem{\etae(s,a)}$).
This has significant algorithmic implications, as it requires solving a sequence of standard RL problems, each of which admits offline policy evaluation.

To smooth the transition of the reward signal between successive iterations, we enforce a slow change of the Lagrange multipliers.
More specifically, we use the technique of bounded Lagrange multipliers~\citep{StookeAA20, zahavy2023domino}, which applies a Sigmoid transformation $\lambda=\sigma(\mu)$ component-wise to unbounded variables $\mu\in\mathbb{R}^{|Z|}$, so that the effective reward is a convex combination of a diversity term and an expert imitation term.
In practice, this transformation ensures that $\lambda>0$.
Hence, the reward for each latent skill $z$ becomes
\begin{equation}\label{eq:sigma-reward}
R_{z}^{\mu}(s,a):=\left(1-\sigma(\mu_{z})\right)\frac{\log\left(\cdiv{q^{\star}(z|s)}|Z|\right)}{|Z|}+\sigma(\mu_{z})\log\cdem{\etae(s,a)}.
\end{equation}

We now present the resulting multi-phase optimization procedure in \Cref{alg:doi}.
For the offline training of the policy (in Phase 1), the skill-discriminator (in Phase 2), and the estimation of the KL divergence value (in Phase 3), we use importance sampling~\cref{imp-weight-proc} and give the corresponding empirical estimators in~\cref{app:sec:importance_sampling}.
Our practical implementation leverages the power of neural networks and deep learning techniques for accurate function approximation.
More specifically, we train an expert policy $\pi_{\widetilde{E}}$, a skill-conditioned policy $\{ \pi_z \}_{z\in Z}$ and a value function $\{ V_z \}_{z\in Z}$.
While practically convenient, this means that each phase of \Cref{alg:doi} is only approximately solved.
In particular, we do not solve the optimization problem to optimality in each phase, but rather perform a few gradient descent steps.

\begin{algorithm}[htb]
    {\small
        \caption{Diverse Offline Imitation (DOI)}
        \label{alg:doi}
        \textbf{Input:} a state-only expert dataset $\mathcal{D}_{E}\sim d_{E}(S)$ and an offline dataset $\mathcal{D}_O\sim d_O(S,A)$ such that $\mathcal{D}_{E}\subset\mathrm{States}[\mathcal{D}_{O}]$.
        
        \textbf{Pre-compute} a state-discriminator $c^{\star}:\mathcal{S}\rightarrow(0,1)$ via optimizing the following objective with the gradient penalty in~\citep{gulrajani2017improved}
        $\min_{c}\mathbb{E}_{d_{E}(s)}[\log c(s)]+\mathbb{E}_{d_{O}(s)}[\log(1-c(s))]$

        Apply \textbf{Phase 1} with reward $R(s,a)=\log\frac{c^{\star}(s)}{1-c^{\star}(s)}$ to compute ratios $\cdem{\etae(s,a)}=\dE(s,a)/d_{O}(s,a)$ for all $s,a\in\mathcal{D}_O$
        \vspace{0.3cm}
        
        \textbf{Repeat until convergence:}
        
        $\quad$\textbf{Phase 1.} (Fixed Lagrange multipliers $\sigma(\mu)$ and skill-discriminator values $\cdiv{q^{\star}(z|s)}$)
        
        $\quad$\textbf{For} each latent skill $z$:
        
        $\quad$$\quad$ compute a value function $V_{z}^{\star}$ optimizing \cref{eq:Vstar} with reward $R_{z}^{\mu}(s,a)$ in \cref{eq:sigma-reward}
        
        $\quad$$\quad$ compute ratios $\eta_{z}(s,a)=\mathrm{softmax}_{d_O(s,a)}\left(R_{z}^{\mu}(s,a)+\gamma\mathcal{T}V_{z}^{\star}(s,a)-V_{z}^{\star}(s)\right)$ for all $s,a\in\mathcal{D}_O$

        $\quad$$\quad$ train a skill-conditioned policy $\pi_{z}^{\star}=\arg\max_{\pi_z}\mathbb{E}_{d_{O}(s,a)}\big[\eta_{z}(s,a)\log\pi_{z}(a|s)\big]$
        \vspace{0.3cm}
        
        $\quad$\textbf{Phase 2.} (Fixed ratios $\eta_z(s,a)$ and bounded Lagrange multipliers $\sigma(\mu)$)
        
        $\quad$ Train a skill-discriminator $\cdiv{q^{\star}}=\arg\max_{\cdiv{q(\cdot|s)}}\mathbb{E}_{p(z)}\mathbb{E}_{d_{O}(s,a)}\big[\eta_{z}(s,a)\log \cdiv{q(z|s)}\big]$
        \vspace{0.3cm}
        
        $\quad$\textbf{Phase 3.} (Fixed ratios $\cdem{\etae(s,a)}$ and $\eta_z(s,a)$)
        
        $\quad$Compute for each latent skill $z$ an estimator 
        $\phi_{z}:=\mathbb{E}_{d_{O}(s,a)}\big[\eta_{z}(s,a)\log(\eta_{z}(s,a)/\cdem{\etae(s,a)})\big]$
        
        $\quad$Optimize the loss $\min_{\mu}\sum_{z}\sigma(\mu_{z})(\epsilon-\phi_{z})$
    }
\end{algorithm}

We have found that fitting the skill-discriminator $\cdiv{q(z|s)}$ is prone to collapse to the uniform distribution.
To alleviate this issue, in addition to the variational lower bound objective \eqref{eq:VarLB}, we add the DISDAIN information gain term, proposed in~\citep{strouse2021disdain}.
This bonus term is an entropy-based disagreement penalty that estimates the epistemic uncertainty of the skill-discriminator, and is implemented in practice by an ensemble of randomly initialized skill-discriminators.
Due to the high initial disagreement on unvisited states, this intrinsic reward provides a strong exploration signal and leads to the discovery of more diverse behaviors.
Intuitively, for states with small epistemic uncertainty, the skill-discriminator (averaged over the ensemble members) should reliably discriminate between latent skills, thus making the intrinsic reward of the skill-discriminator's log-likelihood more accurate.

\section{Experiments}
\label{sec:experiments}

For evaluation of our method we consider 12 degree-of-freedom quadruped robot \envsolo{}~\citep{grimminger2020open}, on a simple locomotion task in both the \emph{simulation} and the \emph{real} system.
We complement this with an obstacle navigation task, in simulation, and demonstrate that some of the learned diverse skills robustly reach a target position while the expert fails.
Furthermore, we provide evaluation on the \envant{}, \envwalker{}, \envhalfcheetah{} and 
\envhopper{} environments from the standard D4RL benchmark~\citep{fu2020d4rl}.

\subsection{Locomotion}
\label{subsec:locomotion}

\paragraph{Data collection.}
For the \envsolo{} evaluation, we collected domain-randomized offline and expert data from simulation in the Isaac Gym~\citep{makoviychuk2021isaac}, using pretrained policy checkpoints obtained by training the robot to track a certain speed of the base with the on-policy diversity maximization algorithm DOMiNiC~\citep{cheng2024dominic}.
We defer the data collection procedure to the \cref{app:sec:dataset-collection}. 
The \emph{expert dataset} was collected by using the best deterministic policy from the last checkpoint of the training procedure, which was trained to track forward velocity only without diversity objective.
In contrast, the \emph{offline dataset} was acquired by employing stochastic policies gathered from various checkpoints throughout the training of the expert, featuring multiple latent skills.
More than half of the \emph{offline dataset} was collected by a random Gaussian policy.
In line with previous approaches by \citet{kim2022demodice} and \citet{ma2022smodice}, our practical implementation aims to fulfill the expert coverage~\cref{asm:base}.
To achieve this, we create the \emph{coverage dataset} $\mathcal{D}_{O}$ by adding a small number of expert trajectories to the offline dataset, resulting in an \textit{unlabeled} expert fraction of 1/160 in $\mathcal{D}_{O}$.
We discard expert actions from the expert dataset to ensure that our algorithm does not have labeled access to them.
The resulting \emph{expert dataset} $\mathcal{D}_{E}$ is used to learn a state classifier $c(s)$.
Then the SMODICE is executed to compute the importance ratios $\cdem{\etae(s,a)}$, see~\cref{sec:Method}.
We trained the policy for $350$ steps, where each step involves the stages described in \cref{sec:algo}.
In each stage, we execute $200$ epochs of batched training over the data.

\begin{figure}[thbp!]
    \centering
    \begin{subfigure}[b]{0.45\textwidth}
        \centering
        \legendskills{}
        \includegraphics[width=\textwidth]{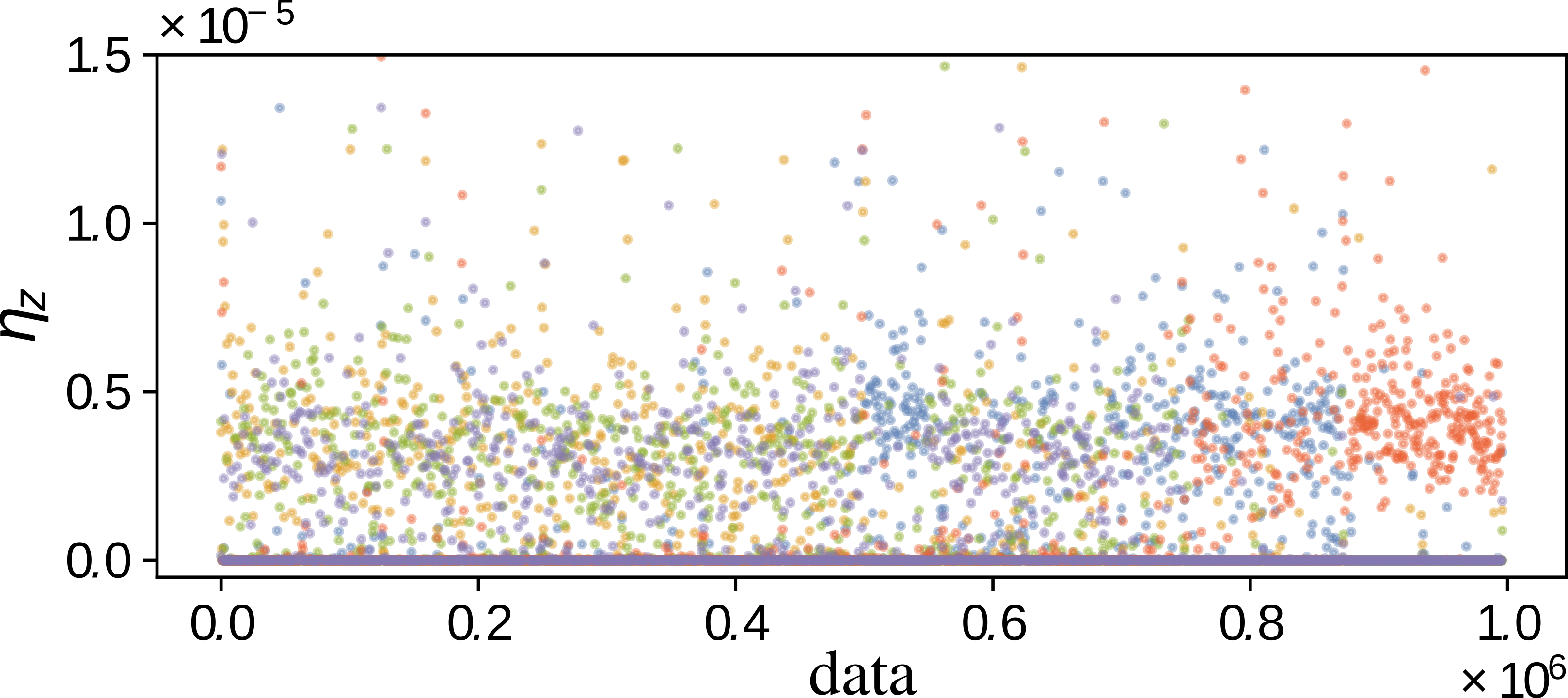}\\
        \vspace{.2em}
        \includegraphics[width=\textwidth]{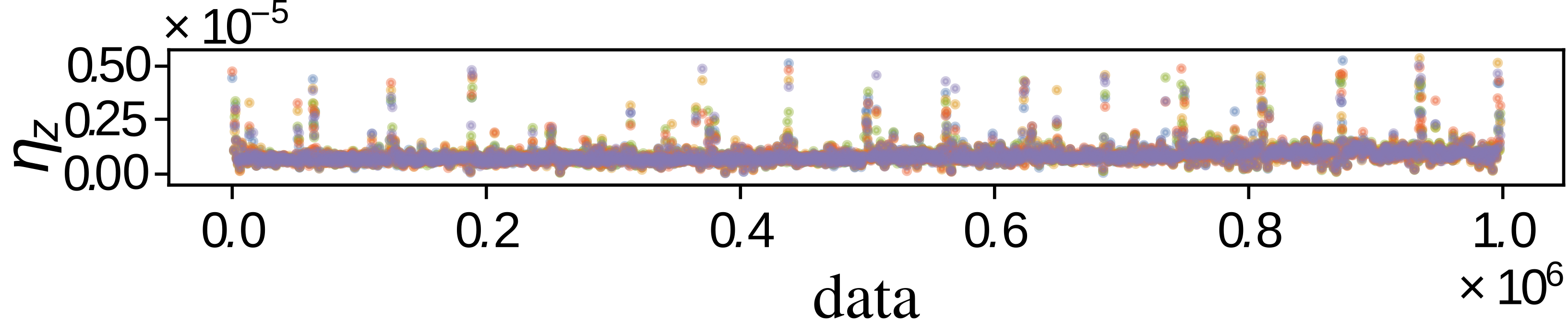}
        \caption{}
        \label{fig:eta_z}
    \end{subfigure}
    \hspace{1em}
    \begin{subfigure}[b]{0.45\textwidth}
        \centering
        \legendtwo{}
        \vspace{1em}
        \includegraphics[width=\textwidth]{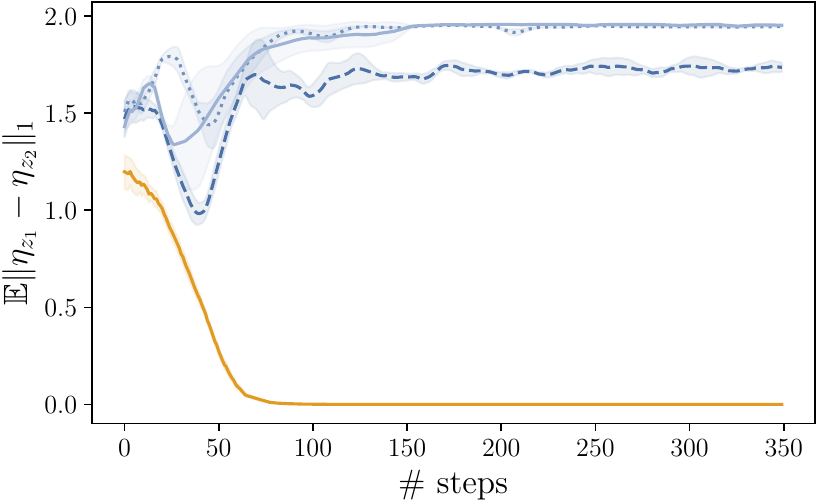}
        \caption{}
        \label{fig:l1_distance_learned}
    \end{subfigure}
    \caption{{\small Data points separation by importance ratios $\eta_z(s,a)$, given different levels of $\epsilon$ in \envsolo{}. (a) Distribution of importance ratios $\eta_z(s,a)$ over the offline dataset $\mathcal{D}_{O}$ for distinct skills with \method{}$^4$ ($\epsilon=4$) (upper) and a skill-conditioned variant of SMODICE (lower). 
    (b) Average $\ell_1$ distance of ratios $\eta_z$ belonging to distinct skills, depending on $\epsilon$. 
    The higher the value of $\epsilon$, the greater the $\ell_1$ distance.
    The shaded areas show the interval between the 0.25 and 0.75 quantiles, computed over \numSeeds seeds.}}
    \label{fig:learned-lambda-res}
\end{figure}

Here with DOI$^\epsilon$ we denote an execution of \Cref{alg:doi} with constraint threshold set to $\epsilon$.
We proceed by analyzing the learned DOI skills in three evaluation settings:
i) over the fixed offline datasets; ii) a Monte Carlo on-policy evaluation in the simulator; and iii) the resulting clustering structure involving the offline and expert datasets, as well as the DOI skills and the SMODICE expert evaluated in the simulation.

\paragraph{Importance ratios distance.}
In \cref{fig:learned-lambda-res}, we measure the state-action occupancy $d_z(s,a)$ for each latent skill $z$ through the proxy of importance ratios $\eta_z(s,a)$,\footnote{
For the computation of the skill-ratios $\eta_z(s,a)$, we choose a projection $\Pi$ of the expert state (see \cref{app:sec:observation-projection}) that yields 3-dimensional planar and angular velocities of the robot's base in the base frame.
} for different values of $\epsilon$.
As expected, a higher value of $\epsilon$ increases diversity, resulting in different importance ratios per skill for individual data points.
This difference is then aggregated by computing an expected $\ell_1$ distance between importance ratios of distinct skills, i.e., $\E \|\eta_{z_i} - \eta_{z_j}\|_1$, and is reported in \cref{fig:learned-lambda-res}.
We note that the looser the constraint (lighter color), the easier it is to diversify in the sense of $\eta_z$.
\cref{fig:l1_distance_learned} shows the average $\ell_1$ distance between skill importance vectors $\eta_z$ over the dataset for $\epsilon \in \{0.0, 1.0, 2.0, 4.0\}$ (lighter color indicates higher $\epsilon$).
Moreover, the tighter the constraint (smaller $\epsilon$), the smaller the difference between the distinct skill importance ratios.

To analyze the influence of the diversity objective on the learned skills, we consider as a baseline a skill-conditioned variant of \citep{ma2022smodice}, denoted SMODICE$^\dagger$, which does not have access to the skill discriminator $\cdiv{q(z|s)}$.
This is equivalent to \method{} with fixed $\sigma(\mu_z)=1$ in the reward \cref{eq:sigma-reward}.
We defer further experiments with fixed Lagrange multipliers to \cref{app:sec:additional-experiments}.
In \cref{fig:eta_z}, we observe diversification across the dataset assignment to skills when using \method{}, whereas training an ensemble of skills with only expert imitation reward (i.e., $\sigma(\mu_z)=1$) collapses to nearly the same importance ratios per skill per data point.

\paragraph{Successor features distance.}
We have further evaluated diversity on the Monte Carlo estimates of the expected successor features over the initial state, based on 30 policy rollouts per skill.
The $\gamma$-discounted successor features (SFs) for state $s$ are defined as $\psi_z(s) = \E_{d_z(s)}[\phi(s)]$, where $d_z(s)$ is the $\gamma$-discounted state occupancy for a skill policy $\pi_z$.
With slight abuse of notation, we define $\psi_z = \E_{\rho_0(s)}[\psi_z(s)]$, the expected SFs over the initial state distribution.
As a diversity metric, we take the expected $\ell_2$ distance between the SFs of distinct skills, i.e., $\E\|\psi_{z_1} - \psi_{z_2}\|_2$.
The results are presented in \cref{fig:div-and-task} and are consistent with the proxy diversity metric.
In particular, there is a correspondence between the offline data separation induced by the importance ratios $\eta_z$ (see~\cref{fig:eta_z}), and a higher distance between the expected SFs $\psi_z$ (see~\cref{fig:task-reward}).
In terms of performance, \method{} achieves a forward velocity comparable to the expert (see~\cref{fig:task-reward}) while learning diverse skills with respect to base height $h$ (see~\cref{fig:height-std}).
We also observed that the multipliers $\sigma(\mu_z)$ are non-zero for all skills, indicating that the constraint is active.
In addition, they stabilize at reasonable levels as training progresses, which we show in \cref{app:sec:lagrange-stability} for both the \envsolo{} and \envant{}.

\begin{figure}[h]
    \centering
    \legend{}
    \vspace{1em}
    \begin{subfigure}[b]{0.45\textwidth}
        \centering
        \includegraphics[width=\textwidth]{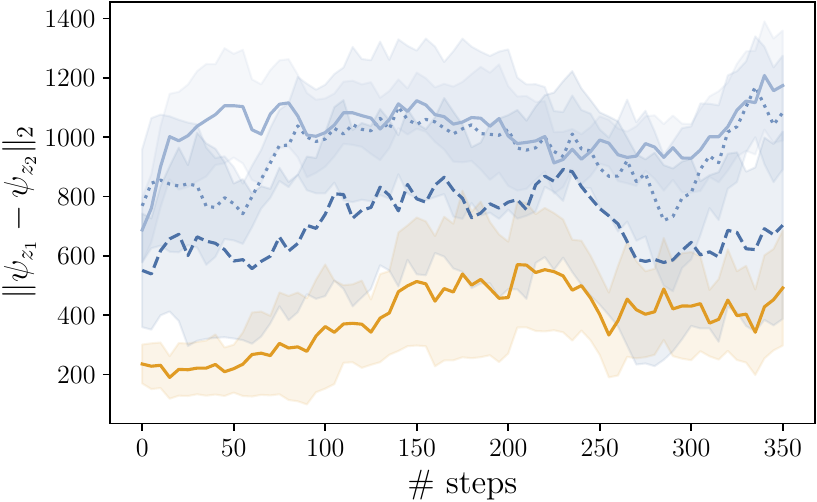}\\
        \caption{}
        \label{fig:task-reward}
    \end{subfigure}
    \hspace{1em}
    \begin{subfigure}[b]{0.45\textwidth}
        \centering
        \includegraphics[width=\textwidth]{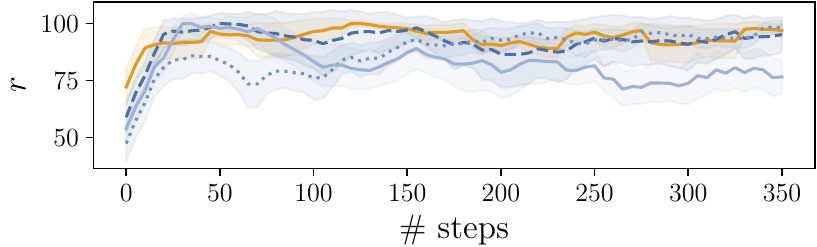}\\
        \includegraphics[width=\textwidth]{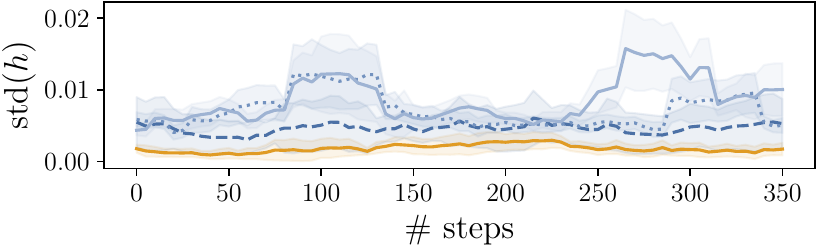}
        \caption{}
        \label{fig:height-std}
    \end{subfigure}
    \caption{{\small (a) Average $\ell_2$ distance between Monte Carlo estimates of successor features $\psi_z$ of distinct skills; (b) return $r$ as $\%$ of expert return and standard deviation of base height $\mathrm{std}_z(h)$. Both depend on $\epsilon$ for the \envsolo{}.
    The shaded areas show the interval between the 0.25 and 0.75 quantiles, computed over \numSeeds seeds.}}
    \label{fig:div-and-task}
\end{figure}

\paragraph{DOI skills form well-separated clusters.}

Here we conduct a controlled experiment with full trajectory information, which remains hidden to the DOI algorithm.
In \cref{fig:well-separated-clusters}, the Successor Features of each trajectory in the {\color{ourdarkbrown} expert dataset} are transformed by UMAP~\citep{McInnes2018umap} algorithm into 2D space.
This transformation is then used to map the SFs of each trajectory into 2D space for: i) the {\color{ourgrey} offline dataset}, ii) the {\color{ourorange} SMODICE expert} evaluated in simulation, and iii) the learned DOI skills ({\color{ourred} red}, {\color{ourgreen} green}, {\color{ourblue} blue}, {\color{ourpurple} purple}, {\color{cyan} cyan}) also evaluated in simulation.
The diversity of learned DOI skills is reflected in a well-separated cluster structure.

\begin{figure}[h]
    \centering
    \includegraphics[scale=0.7]{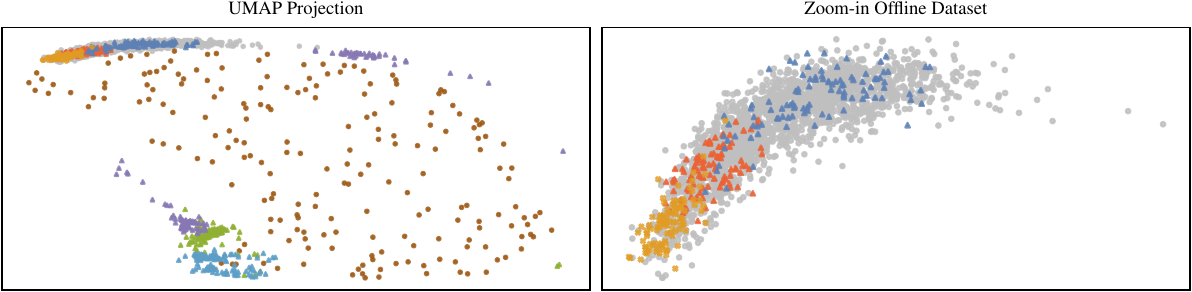}
    \caption{Successor Features projection onto 2D space using the UMAP algorithm.}
    \label{fig:well-separated-clusters}
\vspace{-0.2cm}
\end{figure}

\subsection{Robust Obstacle Navigation}

\paragraph{Data collection.} 
Similarly to the locomotion task in \cref{subsec:locomotion}, both \emph{expert dataset} and \emph{offline dataset} were generated from pretrained policy checkpoints from training a robot to navigate in the terrain of obstacles with fixed time limit using the DOMiNiC~\citep{cheng2024dominic} algorithm. 
Unlike the previous task, the \emph{expert dataset} was collected using the best deterministic skill-conditioned policy from the last checkpoint of the training procedure, which exhibits diverse strategies to navigate the obstacle terrain, including bypassing it from both sides or climbing over it. 
The \emph{offline dataset} was acquired through rolling out stochastic policies gathered from multiple checkpoints with multiple skills. 
Both \emph{expert dataset} and \emph{offline dataset} were collected in a terrain of a single obstacle of a fixed height of 0.2 meters. 
Similar to \cref{subsec:locomotion}, we create the \emph{coverage dataset} $\mathcal{D}_{O}$ by adding a small number of expert trajectories to the offline dataset.
For details on collecting the dataset for the obstacle navigation task, we refer interested readers to the \cref{app:sec:dataset-collection}. 

\paragraph{Multi-modal expert limitations.}
Deriving a single policy by {\color{darkorange} SMODICE} from expert demonstrations, even in the setting when the dataset was collected from diverse expert strategies, may lack robustness to distribution shifts.
This observation emphasizes the need for diverse policy extraction.
To illustrate this with a concrete example, consider a scenario where a \envsolo{} robot navigates around a single box obstacle to reach a target position behind it.
The target position can be reached either from the sides (left or right) of the box or by climbing over it (the less robust path).
In our experiments, the expert dataset $\gD_E$ contains all of the above strategies.
As shown in \Cref{fig:robustness-experiment}, for boxes with a height of at least 0.3 meters, the {\color{darkorange} SMODICE expert} consistently positions itself in front of the box and thus fails to robustly reach the target position.

\paragraph{Extracting robust policies.}
In \Cref{fig:robustness-experiment}, we analyze return distributions and sampled trajectories for box heights of $\{0.3,0.6\}$ meters.
The {\color{darkorange} SMODICE expert} predominantly fails to reach the target position, due to a bias towards climbing over the box.
In contrast, a {\color{ourblue} DOI} skill consistently chooses the left side, which leads robustly to the target position and achieves a superior return distribution.
However, it is important to note that not all learned DOI skills are robust.
Hence, a subsequent selection process is required.
Further details about all learned DOI skills, their return distributions and sampled trajectories, different box heights, and additional experimental information are presented in \cref{app:robustness}.



\begin{figure}[h]
\newlength{\gridspace}
\setlength{\gridspace}{1.2em}
\newcommand\scale{0.8}
\begin{center}
\begin{tabular}{c c c c}
    \hspace{\gridspace}\includegraphics[scale=\scale]{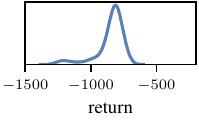} &
    
    \hspace{\gridspace}\includegraphics[scale=\scale]{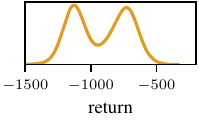} &
    
    \hspace{\gridspace}\includegraphics[scale=\scale]{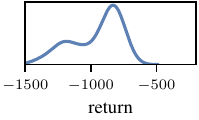} &
    
    \hspace{\gridspace}\includegraphics[scale=\scale]{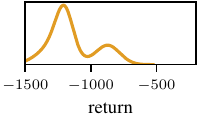} \\
    
    \includegraphics[scale=\scale]{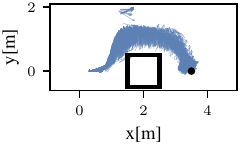} & 
    
    \includegraphics[scale=\scale]{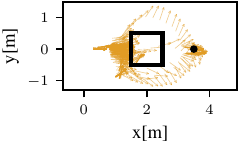} &
    
    \includegraphics[scale=\scale]{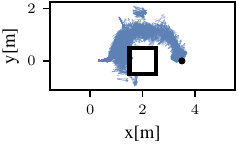} &
    
    \includegraphics[scale=0.85]{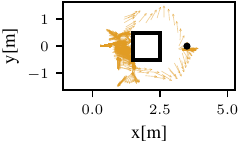} \\
    \multicolumn{2}{c}{\hspace{\gridspace}{\small (a)}} & \multicolumn{2}{c}{\hspace{\gridspace}{\small (b)}} 
\end{tabular}
\end{center}
\caption{{\small Return distributions and sampled trajectories of {\color{darkorange} SMODICE} and a {\color{ourblue} DOI} skill for terrains with box height (a) $0.3$ and (b) $0.6$.
The heights of the boxes are out-of-distribution for the {\color{darkorange} SMODICE}, which tends to get stuck in front of the box due to a bias towards climbing over it.
In contrast, the robust {\color{ourblue} DOI} skill takes a detour to the left side of the box.}}
\label{fig:robustness-experiment}
\end{figure}


\subsection{Standard D4RL Environments}
We consider the case where we have offline data generated from a random policy mixed with a small amount of expert trajectories.\footnote{The same setting was considered by \citet{ma2022smodice}.}
\Cref{fig:d4rl-results} shows the results for both the expected $\ell_2$ distance between the SFs or the importance ratios $\eta_z$ of distinct skills.
We normalize the state feature $\phi(s)$ when comparing SFs $\psi_z$ across environments in \cref{fig:d4rl-diversity}.
In most cases, we report a trade-off between the average skill return and the imitation level $\epsilon$.
The larger the imitation slack $\epsilon$, the more diverse the skills become, but at the cost of lowering the average return, and vice versa.
Nevertheless, in \cref{fig:d4rl-diversity} we show that $\epsilon$ retains some controllability over diversity.
The \envwalker{} is particularly sensitive to relaxation of the occupancy constraint with respect to performance.
We hypothesize that this is due to the fact that the space of policies that achieve a stable gait is very restrictive, resulting in a significant loss of task return for even a small amount of skill diversity.
In contrast, the \envant{} exhibits high stability, with several skills achieving close to expert performance in terms of $r$.
These results are also consistent with {\color{darkorange} SMODICE expert} policies used for computing $\cdem{\etae(s,a)}$ (see \cref{app:sec:smodice-results}).

\begin{figure}[h]
    \centering
    \legendenvs{}
    
    
    \vspace{1em}
    \begin{subfigure}[b]{0.4\textwidth}
        \centering
        \includegraphics[width=\textwidth]{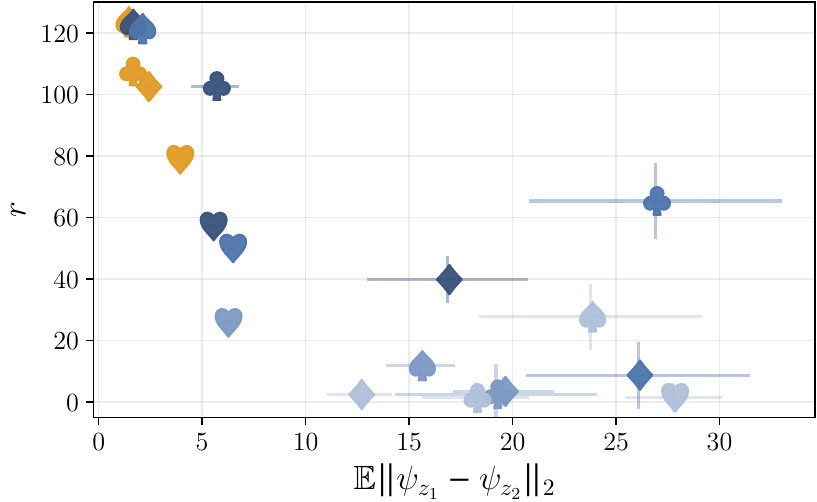}\\
        \caption{}
        \label{fig:d4rl-diversity}
    \end{subfigure}
    \hspace{1em}
        \begin{subfigure}[b]{0.4\textwidth}
        \centering
        \includegraphics[width=\textwidth]{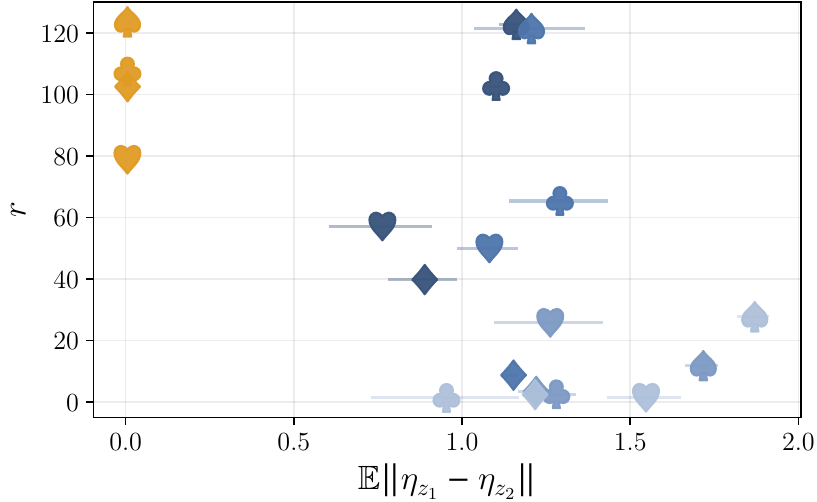}\\
        \caption{}
        \label{fig:d4rl-w_e_dist}
    \end{subfigure}
    \caption{{\small  Results on D4RL environments with offline data collected from a random policy for $\epsilon=$ 
 {\color{ourorange}0.0}, {\color{sc4}0.5}, {\color{sc3}1.0}, {\color{sc2}2.0}, {\color{sc1}4.0}. In figure (a) we observe the tradeoff between average skill return and average successor features distance over skills. 
 In figure (b), we report the tradeoff w.r.t. average $\ell_1$ distance of importance ratios $\eta_z$.}
 The lines indicate the standard deviation computed over \numSeeds seeds.}
    \label{fig:d4rl-results}
\end{figure}

\section{Conclusion}\label{sec:conclusion}

We proposed \method{}, a principled offline RL algorithm for unsupervised skill discovery that, in addition to maximizing diversity, ensures that each learned skill imitates state-only expert demonstrations to a certain degree.
Our main analytical contribution is to connect Fenchel duality, reinforcement learning, and unsupervised skill discovery to maximize a mutual information objective subject to KL-divergence state occupancy constraints.
We have shown that \method{} can diversify offline policies for a 12-DoF quadruped robot (in simulation and in reality) and for several environments from the standard D4RL benchmark in terms of both $\ell_2$ distance of expected successor features and $\ell_1$ distance of importance ratios, which is visible from the data separation induced by $\eta_z(s,a)$ among skills.
The importance ratio distance, computed offline, is a robust indicator of diversity, which aligns with the online Monte Carlo diversity metric of expected successor features. 
The resulting skill diversity naturally entails a trade-off in task performance.
We can control the amount of diversity via an imitation level $\epsilon$, which ensures that distinct skills remain close to the expert in terms of state-action occupancy, which also indirectly controls task performance loss.
A promising direction for future research is to impose constraints on the value function of each skill to ensure near-optimal task performance.

\section*{Limitations}

Our approach, while promising, is not without limitations. 
The diversity objective, which is given by a variational lower bound on the mutual information between states and skills, necessitates the training of a skill-discriminator. 
This design choice, however, presents several practical challenges:
i) the single-step policy and skill-discriminator update in the offline setting does not provide as accurate a policy estimate as sampling a Monte Carlo trajectory in the online setting \citep{eysenbach19diayn};
ii) this inaccuracy, when combined with the non-stationary reward (bounded Lagrange multipliers and skill-discriminator), could result in a skill-discriminator that fails to accurately discriminate among skills; and 
iii) while the introduction of an additional information gain term, as in \citet{strouse2021disdain}, can alleviate this issue, its effect could fade quickly and serve only as an initial diversity boost in the offline setting.
Furthermore, the current paradigm is well-suited for a discrete number of skills, leaving open the important questions of extending our framework to infinitely many skills and addressing in a principled way the practical challenges surrounding the skill-discriminator training. We leave these important open questions for future work.

\section*{Acknowledgments}

We acknowledge the support from the German Federal Ministry of Education and Research (BMBF) through the Tübingen AI Center (FKZ: 01IS18039B). 
Georg Martius is a member of the Machine Learning Cluster of Excellence, funded by the Deutsche Forschungsgemeinschaft (DFG, German Research Foundation) under Germany’s Excellence Strategy – EXC number 2064/1 – Project number 390727645. This work was supported by the ERC - 101045454 REAL-RL. Pavel Kolev was supported by the Cyber Valley Research Fund and the Volkswagen Stiftung (No 98 571). 
We would like to thank Nico Gürtler for helpful comments on an earlier version of this paper.

\newpage

\bibliography{references}
\bibliographystyle{rlc}

\newpage

\onecolumn

\appendix
{
    \centering
    \begin{center}
        \LARGE
        \textbf{Supplementary for Diverse Offline Imitation Learning}
    \end{center}
}

\renewcommand{\thetable}{S\arabic{table}}
\renewcommand{\thefigure}{S\arabic{figure}}
\renewcommand{\theequation}{S\arabic{equation}}
\setcounter{table}{0}
\setcounter{figure}{0}
\setcounter{equation}{0}

\section{Reproducibility}\label{sec:reproducibility}

For implementation of \method{} we have used the PyTorch autograd  framework.
For the \envsolo{} training we made use of Isaac Gym for data collection and evaluation of the learned skill policies.
For the D4RL experiments we evaluated the policies using the Mujoco v2.1 rigid body simulator.
The training of the skill policies with evaluation and pre-training of the SMODICE expert ratios takes about 4 hours on an NVIDIA GeForce RTX 4080 graphics card with a batch size of 512.
We plan on opensourcing the code and the \envsolo{} data post conference acceptance.
The \envsolo{} robot has been developed as part of the Open Dynamic Robot Initiative~\citep{grimminger2020open}, and a full assembly kit is available at a cheap price in order to reproduce the real system experiments from \cref{app:sec:real-robot-experiments}.

\section{Fenchel Conjugate}\label{app:FenchelConjugate}

The Fenchel conjugate $f_{\star}$ of a function $f:\Omega\rightarrow\mathbb{R}$ is given by 
\begin{equation}
f_{\star}(y)=\sup_{x\in\Omega}\langle x,y\rangle-f(x),    
\end{equation}
where $\langle\cdot,\cdot\rangle$ denotes the inner product defined on a space $\Omega$.
For any proper, convex and lower semi-continuous function $f$ the following duality statement holds $f_{\star\star}=f$, that is 
\begin{equation}
f(x)=\sup_{y\in\Omega_{\star}}\langle x,y\rangle-f_{\star}(y),
\end{equation}
where $\Omega_{\star}$ denotes the domain of $f_{\star}$.
For any probability distributions $p,q\in\Delta(S)$ with $p(s)>0$ implying $q(s)>0$, we define for convex continuous functions $f$ the family of $f$-divergences
\begin{equation}
\Df(p||q)=\E_{q}\left[f\left(\frac{p(x)}{q(x)}\right)\right].
\end{equation}
The Fenchel conjugate of an $f$ divergence $\Df(p||q)$ at a function $y(s) =p(s)/q(s)$ is, under certain conditions\footnote{$f$ needs to satisfy certain regularity conditions~\citep{dai2017learning}}, given by
\begin{equation}
\mathrm{D}_{\star, f}(y) = \E_{q(s)} \left [ f_\star(y(s)) \right ].
\end{equation}
Furthermore, its maximizer satisfies 
\begin{equation}
    p^\star(s) = q(s) f_\star'(y(s)).
\end{equation}

In the important special case where $f(x) = x \log(x)$, we obtain the well-known Kullback-Leibler (KL) divergence 
\begin{equation}
    \Dkl(p||q)=\sum_{s}p(s)\log\frac{p(s)}{q(s)}.
\end{equation}
The Fenchel conjugate $\mathrm{D}_{\star,\mathrm{KL}}$ of the KL-divergence at a function $y(s)=p(s)/q(s)$ has a closed-form~\citep[Example 3.25]{boyd2004convex}
\begin{equation}\label{eq:fencel_conjugate_KL}
\mathrm{D}_{\star,\mathrm{KL}}(y)=\log\mathbb{E}_{q(s)}[\exp{y(s)}],
\end{equation}
and its maximizer $p^{\star}$ satisfies
\begin{equation}\label{eq:conjugate-optimal}
p^\star(s) = q(s) \mathrm{softmax}_{q}(y(s)), \quad \text{where} \quad \mathrm{softmax}_{q}(y(s))=\frac{\exp{y(s)}}{\mathbb{E}_{q(s^{\prime})}[\exp{y(s^{\prime}))}]}
\end{equation}

\section{Lagrange Relaxation}\label{app:sec:Lagrange}
The Lagrange relaxation is given by
\[
\max_{d_{z}(s,a),q(z|s)}\min_{\lambda>0}\sum_{z}\mathbb{E}_{d_{z}(s)}\left[\frac{\log\left(|Z|q(z|s)\right)}{|Z|}\right]+\sum_{z}\lambda_{z}\left[\epsilon-\Dkl\left(d_{z}(S,A)||\dE(S,A)\right)\right].
\]
By combining \cref{app:lem:dz_dEz} and the definition of $\etae(s,a)=\dE(s,a)/d_{O}(s,a)$, we have
\[
\Dkl\left(d_{z}(S,A)||\dE(S,A)\right)=\Dkl\left(d_{z}(S,A)||d_{O}(S,A)\right)-\mathbb{E}_{d_{z}(s,a)}\left[\log\etae(s,a)\right]
\]
and thus
\begin{equation}\label{app:main:problem}
\max_{d_{z}(s,a),q(z|s)}\min_{\lambda>0}\sum_{z}\lambda_{z}\left[\epsilon+\mathbb{E}_{d_{z}(s,a)}\left[R_{z}^{\lambda}(s,a)\right]-\Dkl\left(d_{z}(S,A)||d_{O}(S,A)\right)\right],
\end{equation}
where the reward is given by
\[
R_{z}^{\lambda}(s,a):=\frac{\log\left(|Z|q(z|s)\right)}{\lambda_{z}|Z|}+\log\etae(s,a).
\]

\section{Algorithmic Phases}

\subsection{Value Function Training}\label{app:lem:value-func-training}

With fixed skill-discriminator $q(z|s)$ and Lagrange multipliers $\lambda>0$, the Problem~\ref{app:main:problem} becomes:
\[
\max_{\{d_{z}(s,a)\}_{z\in Z}}\sum_{z}\lambda_{z}\left\{ \mathbb{E}_{d_{z}(s,a)}\left[R_{z}^{\lambda}(s,a)\right]-\mathrm{D}_{\mathrm{KL}}\left(d_{z}(s,a)||d_{O}(s,a)\right)\right\} 
\]
or equivalently for every skill $z$:
\begin{equation}
\begin{array}{rl}
\max\limits_{\substack{d_z(s,a) \geq 0}} & \mathbb{E}_{d_{z}(s,a)}\left[R_{z}^{\lambda}(s,a)\right]-\mathrm{D}_{\mathrm{KL}}\left(d_{z}(S,A)||d_{O}(S,A)\right)\\
\text{s.t.}&\sum_{a}d_{z}(s,a)=(1-\gamma)\rho_{0}(s) +\gamma\mathcal{T}d(s) \quad \forall s.
\end{array}\label{app:LB_fixed_discr_offline}
\end{equation}
We note that the preceding problem formulation involves state-action occupancy.\\

The strict feasibility in Assumption~\ref{asm:strict_feasibility} implies strong duality, and thus Problem~\eqref{app:LB_fixed_discr_offline} shares the same optimal value as the following dual minimization problem (for details see \citep[Section 6]{nachum2020reinforcement} and \citep[Theorem 2]{ma2022smodice}):
\begin{equation}
\begin{array}{rl}
V^{\star}= & \argmin_{V(s)} (1-\gamma)\mathbb{E}_{s\sim\rho_{0}}\left[V(s)\right]\\
& +\log\mathbb{E}_{d_{O}(s,a)}\exp\left\{ R_{z}^{\lambda}(s,a)+\gamma\mathcal{T}V(s,a)-V(s)\right\},
\end{array}
\end{equation}
where
\[
\mathcal{T}V(s,a)=\mathbb{E}_{\mathcal{P}(s^{\prime}|s,a)}V(s^{\prime}).
\]
Moreover, the optimal primal solution reads 
\begin{equation}
\frac{d_{z}^{\star}(s,a)}{d_{O}(s,a)}=\mathrm{softmax}_{d_{O}(s,a)}\left(R_{z}^{\lambda}(s,a)+\gamma\mathcal{T}V_{z}^{\star}(s,a)-V_{z}^{\star}(s)\right).
\end{equation}

\subsection{Skill Discriminator Training}\label{app:sec:train-discriminator}

With fixed skill-conditioned policy $\pi_{z}^{\star}$ and Lagrange multipliers $\lambda>0$, the Problem~\ref{app:main:problem} becomes
\begin{eqnarray*}
    \max_{q(z|s)}\sum_{z}\left\{ \mathbb{E}_{d_{z}(s,a)}\left[R_{z}^{\lambda}(s,a)\right]-\Dkl\left(d_{z}(S,A)||d_{O}(S,A)\right)\right\}
\end{eqnarray*}
and reduces to
\[\label{eq:discriminator-training}
\max_{q(z|s)}\mathbb{E}_{p(z)}\mathbb{E}_{d_{z}(s,a)}\log q(z|s).
\]

\begin{lem}
    Given ratios $\eta_{z}(s,a)$, using weighted-importance sampling, we can train offline an optimal skill-discriminator $q(z|s)$. 
    In particular, we optimize by gradient descent the following optimization problem
    \[
    \max_{q(z|s)}\mathbb{E}_{p(z)}\mathbb{E}_{d_{O}(s,a)}\left[\eta_{z}(s,a)\log q(z|s)\right].
    \]
\end{lem}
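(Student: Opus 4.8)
The plan is to reduce the stated offline objective to the ideal (but non-computable) diversity objective, and then invoke the importance-weighting identity to make it tractable. First I would use the reduction already derived above: holding the skill-conditioned policy $\pi_z^\star$ (and hence its occupancy $d_z^\star$) and the multipliers $\lambda$ fixed, the only $q$-dependent term of Problem~\eqref{app:main:problem} is the diversity part $\tfrac{\log(|Z|q(z|s))}{\lambda_z|Z|}$ of the reward, so maximizing over $q$ is equivalent to $\max_q \mathbb{E}_{p(z)}\mathbb{E}_{d_z^\star(s,a)}[\log q(z|s)]$ up to the additive constant $\log|Z|$ and the positive scaling $1/|Z|=p(z)$. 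The obstacle at this point is that the expectation is taken under the skill occupancy $d_z^\star$, from which we cannot draw samples in the offline setting.

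The key step is to apply the importance-weighting identity~\eqref{imp-weight-proc} with the choice $f(s,a,z)=\log q(z|s)$. Since $\eta_z(s,a)=d_z^\star(s,a)/d_O(s,a)$ by~\eqref{eq:dopt}, this identity rewrites the intractable objective exactly as $\mathbb{E}_{p(z)}\mathbb{E}_{d_O(s,a)}[\eta_z(s,a)\log q(z|s)]$, which depends only on offline samples from $d_O$ and on the precomputed ratios $\eta_z$. Because the two objectives are equal as functions of $q$, they share the same maximizer, so running gradient descent on the offline surrogate recovers the same optimal discriminator as the original problem; this establishes the claim.

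For completeness I would identify the maximizer explicitly. Writing $\rho(s)=\sum_{z'}p(z')d_{z'}(s)$ and using that $\log q(z|s)$ is action-independent (so marginalizing the state-action objective over actions yields the state-only form $\mathbb{E}_{d_z(s)}[\log q(z|s)]$), the objective equals $\int \rho(s)\sum_z p(z\,|\,s)\log q(z|s)\,\mathrm{d}s$ with posterior $p(z\,|\,s)=p(z)d_z(s)/\rho(s)$; for each $s$ the inner sum is a negative cross-entropy, so Gibbs' inequality forces the per-state optimum $q^\star(z|s)=p(z\,|\,s)$ subject to $\sum_z q(z|s)=1$. The proof has no genuinely hard step: its entire content is the single change of measure, and the only point requiring care is that the importance weights be unbiased, which holds because $\eta_z$ is well defined wherever $d_z^\star$ places mass under the coverage \cref{asm:base} and because the ratios are kept strictly positive in practice.
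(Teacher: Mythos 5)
Your proposal is correct and follows essentially the same route as the paper: the reduction of the Lagrangian to $\max_q \mathbb{E}_{p(z)}\mathbb{E}_{d_z(s,a)}[\log q(z|s)]$ followed by the change of measure via the importance-weighting identity (the paper's \cref{app:lem:IS}). Your closing Gibbs'-inequality identification of the maximizer as the posterior is just a direct-optimality restatement of what the paper establishes through its Discriminator Gradient lemma (\cref{eq:discr_grad}), which shows the objective's gradient coincides with the negative gradient of $\mathbb{E}_{p(s)}[\Dkl(p(Z|s)\|q_\phi(Z|s))]$.
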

\begin{proof}
    The statement follows by combining \cref{eq:discr_grad} and \cref{app:lem:IS}.
\end{proof}

\begin{lem}[Discriminator Gradient]\label{eq:discr_grad}
    It holds that
    \[
    \nabla_{\phi}\mathbb{E}_{p(s)}\left[\Dkl\left(p(Z|s)||q_{\phi}(Z|s)\right)\right]=-\mathbb{E}_{p(z)}\mathbb{E}_{p(s|z)}\left[\nabla_{\phi}\log q_{\phi}(z|s)\right].
    \] 
\end{lem}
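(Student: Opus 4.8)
The plan is to expand the KL divergence into its entropy and cross-entropy parts, note that only the cross-entropy term depends on $\phi$, and then rewrite the remaining expectation over the joint $p(s,z)$ in the reversed factorization order. This is the standard identity underlying variational skill-discriminator training, so the argument is short; the work is in justifying each interchange.

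First I would write out the KL divergence explicitly for the discrete skill variable $Z$, splitting it into an entropy term and a cross-entropy term,
\[
\Dkl\big(p(Z|s)\,\|\,q_{\phi}(Z|s)\big)=\sum_{z}p(z|s)\log p(z|s)-\sum_{z}p(z|s)\log q_{\phi}(z|s),
\]
and then take the outer expectation over $p(s)$. Differentiating in $\phi$, the entropy term $\mathbb{E}_{p(s)}\big[\sum_{z}p(z|s)\log p(z|s)\big]$ is independent of $\phi$ and therefore vanishes, leaving only the gradient of the negative cross-entropy.

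Next I would interchange the gradient with the expectation over $p(s)$ and with the finite sum over $z$, which is permissible because $p(s)$ and $p(z|s)$ carry no $\phi$-dependence and the sum over the discrete skill set $Z$ is finite; under the mild regularity assumed for the parametric family $q_{\phi}$ this exchange is justified by dominated convergence. Pushing $\nabla_{\phi}$ inside to act only on $\log q_{\phi}(z|s)$ gives
\[
\nabla_{\phi}\mathbb{E}_{p(s)}\big[\Dkl(p(Z|s)\,\|\,q_{\phi}(Z|s))\big]=-\mathbb{E}_{p(s)}\Big[\sum_{z}p(z|s)\,\nabla_{\phi}\log q_{\phi}(z|s)\Big].
\]

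Finally I would recombine the measures via Bayes' rule, $p(s)\,p(z|s)=p(s,z)=p(z)\,p(s|z)$, so that $\mathbb{E}_{p(s)}\sum_{z}p(z|s)[\,\cdot\,]=\mathbb{E}_{p(z)}\mathbb{E}_{p(s|z)}[\,\cdot\,]$, which delivers exactly the claimed identity. There is no genuine obstacle here: the only step needing care is the exchange of differentiation and expectation, and that holds under the regularity already implicit in assuming $q_{\phi}$ is a differentiable parametric family.
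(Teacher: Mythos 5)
Your proof is correct and follows essentially the same route as the paper's: both arguments reduce to the observation that $p(z|s)$ carries no $\phi$-dependence (the paper phrases this as $\nabla_{\phi}\log\frac{p(z|s)}{q_{\phi}(z|s)}=-\nabla_{\phi}\log q_{\phi}(z|s)$, you phrase it as the entropy term of the KL dropping out), followed by exchanging the gradient with the expectation and rewriting $p(s)\,p(z|s)=p(z)\,p(s|z)$. Your version is marginally more explicit about justifying the interchange, but there is no substantive difference.
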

\begin{proof}
    Observe that
    \begin{eqnarray*}
        \nabla_{\phi}\Dkl\left(p(Z|s)||q(Z|s)\right)&=&\nabla_{\phi}\mathbb{E}_{p(z|s)}\log\frac{p(z|s)}{q_{\phi}(z|s)}\\&=&-\mathbb{E}_{p(z|s)}\nabla_{\phi}\log q_{\phi}(z|s),
    \end{eqnarray*}
    where the second equality follows by
    \begin{eqnarray*}
        \nabla_{\phi}\log\frac{p(z|s)}{q_{\phi}(z|s)}=-\frac{q_{\phi}(z|s)}{p(z|s)}p(z|s)\frac{\nabla_{\phi}q_{\phi}(z|s)}{[q_{\phi}(z|s)]^{2}}=-\frac{\nabla_{\phi}q_{\phi}(z|s)}{q_{\phi}(z|s)}=-\nabla_{\phi}\log q_{\phi}(z|s).    
    \end{eqnarray*}
\end{proof}

\subsection{KL-divergence Constraint Violation}\label{app:sec:SA-KLdiv}

\begin{lem}[State-Action KL Estimator]\label{app:lem:SA-KL-Estimator}
    Suppose we are given offline datasets $\mathcal{D}_{O}(S,A)\sim d_O$, $\mathcal{D}_{E}(S) \sim d_E$ and optimal ratios $\eta_{z}(s,a)=\frac{d_{z}(s,a)}{d_{O}(s,a)}$ and $\etae(s,a)=\frac{\dE(s,a)}{d_{O}(s,a)}$ for all $(s,a)\in\mathcal{D}_O$, where the state-action occupancy $\dE$ is induced by a policy $\pi_{\widetilde{E}}$ agreeing on the state occupancy of an expert $\pi_{E}$, i.e.
    \[
    \pi_{\widetilde{E}}\in\arg\min_{\pi}\mathrm{D}_{\mathrm{KL}}\left(d_{\pi}(S)||d_{E}(S)\right).
    \]
    Then, we can compute \emph{offline} an estimator of $\mathrm{D}_{\mathrm{KL}}\left(d_{z}(S,A)||\dE(S,A)\right)$  which is given by
    \[
    \phi_z=\mathbb{E}_{d_{O}(s,a)}\left[\eta_{z}(s,a)\log\frac{\eta_{z}(s,a)}{\etae(s,a)}\right].
    \]
\end{lem}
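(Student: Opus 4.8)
The plan is to prove the identity \emph{exactly} as a change of measure, and then observe that the resulting population quantity is computable from offline samples. The starting point is the definition of the state-action KL divergence as an expectation under $d_z$, namely $\Dkl(d_z(S,A)\|\dE(S,A)) = \E_{d_z(s,a)}\big[\log(d_z(s,a)/\dE(s,a))\big]$. Everything then reduces to rewriting this in terms of the two offline-computable ratios and invoking the importance-weighting identity \eqref{imp-weight-proc}.

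First I would rewrite the density ratio inside the logarithm using the common denominator shared by the two importance ratios. Since both $\etaz$ and $\etae(s,a)$ are defined relative to $d_O(s,a)$, on the support of $d_O$ we have $d_z(s,a)/\dE(s,a) = \etaz/\etae(s,a)$, and hence $\Dkl(d_z(S,A)\|\dE(S,A)) = \E_{d_z(s,a)}\big[\log(\etaz/\etae(s,a))\big]$. This cancellation is exactly where the expert coverage \cref{asm:base}, together with the practical positivity of the ratios noted after the lemma, is needed: it guarantees $d_O(s,a)>0$ wherever $d_z$ or $\dE$ places mass, so the ratios are finite and the logarithm is well defined. Note that the minimization characterization of $\pi_{\widetilde{E}}$ plays no role in the identity itself; it only fixes \emph{which} occupancy $\dE$ we measure distance to.

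Next I would apply the importance-weighting identity \eqref{imp-weight-proc}, specialized to a single fixed skill $z$, with the test function $f(s,a,z) = \log(\etaz/\etae(s,a))$. This converts the expectation under the (unavailable) optimal occupancy $d_z^\star$ into an expectation under the offline occupancy $d_O$ reweighted by $\etaz$, yielding precisely $\phi_z = \E_{d_O(s,a)}\big[\etaz\,\log(\etaz/\etae(s,a))\big]$. Since $\phi_z$ is now an expectation under $d_O$ of a function of the offline-computable ratios, it is estimated from the samples in $\mathcal{D}_O$ by a plain Monte Carlo average, which is the sense in which it is an \emph{offline} estimator.

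The main obstacle is not the algebra, which is a one-line change of measure, but justifying the two measure-theoretic points that make the manipulation legitimate: that the ratios are well-defined and finite on the relevant support, and that \eqref{imp-weight-proc} applies to this particular test function, which is unbounded because it contains a logarithm. The paper sidesteps the integrability concern exactly as flagged after the lemma, by ensuring the ratios are strictly positive (clipping), so that $\phi_z$ is finite and bounded.
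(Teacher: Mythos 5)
Your proof is correct and follows essentially the same route as the paper: the paper first inserts $d_O$ into the ratio via its structural lemma (\cref{app:lem:dz_dEz}), splitting the KL into $\Dkl(d_z\|d_O)-\mathbb{E}_{d_z}[\log\etae]$ before importance-weighting each term, whereas you perform the identical cancellation $d_z/\dE=\etaz/\etae(s,a)$ inside the logarithm in one step and then apply \eqref{imp-weight-proc} once --- an algebraically equivalent manipulation. Your additional remarks on support, positivity of the ratios, and the irrelevance of the minimization characterization of $\pi_{\widetilde{E}}$ to the identity itself are all consistent with the paper's own caveats.
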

\begin{proof}
    By Lemma~\ref{app:lem:dz_dEz} we have
    \[
        \mathrm{D}_{\mathrm{KL}}\left(d_{z}(S,A)||\dE(S,A)\right) = \mathrm{D}_{\mathrm{KL}}\left(d_{z}(S,A)||d_{O}(S,A)\right)-\mathbb{E}_{d_{z}(s,a)}\left[\log\frac{\dE(s,a)}{d_{O}(s,a)}\right].
    \]
    For the first term, we have 
    \begin{eqnarray*}
        \mathrm{D}_{\mathrm{KL}}\left(d_{z}(S,A)||d_{O}(S,A)\right) & = & \mathbb{E}_{d_{z}(s,a)}\log\frac{d_{z}(s,a)}{d_{O}(s,a)}\\
        & = & \mathbb{E}_{d_{O}(s,a)}\left[\eta_{z}(s,a)\log\eta_{z}(s,a)\right].
    \end{eqnarray*}
    The second term reduces to
    \[
    \mathbb{E}_{d_{z}(s,a)}\left[\log\frac{\dE(s,a)}{d_{O}(s,a)}\right]=\mathbb{E}_{d_{O}(s,a)}\left[\eta_{z}(s,a)\log\etae(s,a)\right].
    \]
\end{proof}

\begin{lem}[Structural]\label{app:lem:dz_dEz}
    Suppose $0<\eta_{z}(s,a),\etae(s,a)<\infty$ for all $(s,a)\in\mathcal{D}_O$. 
    Then, we have
    \[
    \mathrm{D}_{\mathrm{KL}}\left(d_{z}(S,A)||\dE(S,A)\right)=\mathrm{D}_{\mathrm{KL}}\left(d_{z}(S,A)||d_{O}(S,A)\right)-\mathbb{E}_{d_{z}(s,a)}\left[\log\frac{\dE(s,a)}{d_{O}(s,a)}\right].
    \]
\end{lem}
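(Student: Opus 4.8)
The plan is to prove this purely as an algebraic identity for KL divergences, obtained by inserting the reference occupancy $d_O$ into the likelihood ratio. No probabilistic machinery beyond the definition of $\Dkl$ and linearity of expectation is needed; the sole role of the hypothesis $0<\eta_z(s,a),\etae(s,a)<\infty$ is to guarantee that, on the support of $d_z$, all three occupancies $d_z$, $d_O$, and $\dE$ are strictly positive and finite, so that every logarithm appearing below is well defined and finite and the splitting step is legitimate.

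First I would expand the target divergence by its definition as an expectation of a log-ratio,
\[
\Dkl\left(d_{z}(S,A)\|\dE(S,A)\right)=\mathbb{E}_{d_{z}(s,a)}\left[\log\frac{d_{z}(s,a)}{\dE(s,a)}\right].
\]
Next I would multiply and divide the argument of the logarithm by $d_O(s,a)$ and use $\log(xy)=\log x+\log y$ to decompose
\[
\log\frac{d_{z}(s,a)}{\dE(s,a)}=\log\frac{d_{z}(s,a)}{d_{O}(s,a)}-\log\frac{\dE(s,a)}{d_{O}(s,a)}.
\]
Taking the expectation $\mathbb{E}_{d_z(s,a)}$ of both sides and applying linearity of expectation, I would recognize the first resulting term as $\Dkl\left(d_{z}(S,A)\|d_{O}(S,A)\right)$ and the second as precisely the correction term $\mathbb{E}_{d_{z}(s,a)}\left[\log(\dE(s,a)/d_{O}(s,a))\right]$ subtracted in the statement, which completes the derivation.

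The only point requiring care — and the step I would justify most explicitly — is the use of linearity of expectation to split $\mathbb{E}[A-B]$ into $\mathbb{E}[A]-\mathbb{E}[B]$, since this can fail when both pieces are infinite. Here the hypothesis that $\eta_z=d_z/d_O$ and $\etae=\dE/d_O$ are strictly positive and finite on $\mathcal{D}_O$ rules out both $\log 0$ and $\log\infty$ on the support of $d_z$, so each expectation is individually finite and the decomposition is valid. I would remark that this is exactly the role played by the lemma's positivity assumption (consistent with the coverage condition of \cref{asm:base}), and that the identity is otherwise a mechanical consequence of the chain-rule-style factorization of the density ratio.
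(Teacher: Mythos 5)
Your proposal is correct and matches the paper's own argument: both insert $d_O(s,a)$ into the likelihood ratio, split the logarithm, and use linearity of expectation, with your added remarks on finiteness simply making explicit the role of the positivity hypothesis. No further comparison is needed.
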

\begin{proof}
    By definition of KL-divergence, we have
    \begin{eqnarray*}
        \mathrm{D}_{\mathrm{KL}}\left(d_{z}(S,A)||\dE(S,A)\right)&=&\mathbb{E}_{d_{z}(s,a)}\left[\log\left(\frac{d_{z}(s,a)}{d_{O}(s,a)}\cdot\frac{d_{O}(s,a)}{\dE(s,a)}\right)\right]\\&=&\mathrm{D}_{\mathrm{KL}}\left(d_{z}(S,A)||d_{O}(S,A)\right)-\mathbb{E}_{d_{Z}(s,a)}\left[\log\frac{\dE(s,a)}{d_{O}(s,a)}\right].
    \end{eqnarray*}
\end{proof}

\section{Importance Sampling}\label{app:sec:importance_sampling}

\begin{lem}[Importance Sampling]\label{app:lem:IS}
    Given ratios $\eta_{z}(s,a)$, it holds for any function $f(s,a)$ that
    \[
    \mathbb{E}_{d_{z}^{\star}(s,a)}\left[f(s,a)\right]=\mathbb{E}_{d_{O}(s,a)}\left[\eta_{z}(s,a)f(s,a)\right].
    \]
    In particular, for any function $g(s)$ we have
    \[
    \mathbb{E}_{d_{z}^{\star}(s)}\left[g(s)\right]=\mathbb{E}_{d_{O}(s,a)}\left[\eta_{z}(s,a)g(s)\right].
    \]
\end{lem}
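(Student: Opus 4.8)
The plan is to prove the identity as a direct change of measure, treating each expectation as a sum over the state-action space and inserting the ratio $\eta_z(s,a):=d_z^\star(s,a)/d_O(s,a)$. First I would expand the left-hand side of the first claim and multiply and divide each summand by $d_O(s,a)$:
\[
\mathbb{E}_{d_z^\star(s,a)}[f(s,a)]=\sum_{s,a}d_z^\star(s,a)\,f(s,a)=\sum_{s,a}d_O(s,a)\,\frac{d_z^\star(s,a)}{d_O(s,a)}\,f(s,a)=\mathbb{E}_{d_O(s,a)}[\eta_z(s,a)\,f(s,a)].
\]
This is a one-line argument; the entire content is that $\eta_z$ is precisely the Radon--Nikodym factor converting an expectation under $d_z^\star$ into one under $d_O$.

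The only step requiring care is the well-definedness of $\eta_z(s,a)$ on the support over which the sum is taken, i.e. that we never encounter $0/0$. Here I would appeal to the closed form of the primal optimum in \eqref{eq:dopt}, which gives $d_z^\star(s,a)=d_O(s,a)\,\mathrm{softmax}_{d_O(s,a)}(\cdots)$ and hence $d_z^\star(s,a)=0$ wherever $d_O(s,a)=0$. Consequently the support of $d_z^\star$ is contained in the support of $d_O$, so every summand with $d_O(s,a)=0$ contributes zero on the left and may be dropped before inserting the ratio; on the remaining terms $\eta_z(s,a)$ is finite, consistent with \cref{asm:base} and the positivity assumed in \cref{asm:strict_feasibility}. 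This is what licenses the division by $d_O$ and is the point I would state explicitly.

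For the second statement, I would reduce it to the first by marginalizing over actions. Using the definition of the state occupancy from the Preliminaries, $d_z^\star(s)=\sum_a d_z^\star(s,a)$, together with the fact that $g$ does not depend on the action, I would write
\[
\mathbb{E}_{d_z^\star(s)}[g(s)]=\sum_s\Big(\sum_a d_z^\star(s,a)\Big)g(s)=\mathbb{E}_{d_z^\star(s,a)}[g(s)],
\]
and then apply the first part with $f(s,a):=g(s)$ to obtain $\mathbb{E}_{d_z^\star(s)}[g(s)]=\mathbb{E}_{d_O(s,a)}[\eta_z(s,a)\,g(s)]$. I expect no genuine obstacle: this is a standard importance-weighting identity, and the only nontrivial ingredient is the support containment of the previous paragraph, which follows immediately from \eqref{eq:dopt}.
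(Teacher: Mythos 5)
Your proof is correct and follows essentially the same route as the paper's: the first identity is the definitional change of measure via $\eta_z=d_z^\star/d_O$, and the second reduces to the first by marginalizing the state-action occupancy over actions since $g$ is action-independent. Your explicit remark on support containment (so that the division by $d_O$ is licensed) is a detail the paper leaves implicit but does not change the argument.
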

\begin{proof}
    The first conclusion follows by definition of $\eta_{z}(s,a)=d_z(s,a)/d_O(s,a)$, whereas the second uses
    \[
        \mathbb{E}_{d_{z}^{\star}(s)}\left[g(s)\right]=\mathbb{E}_{d_{z}^{\star}(s,a)\pi_{z}^{\star}(a|s)}\left[g(s)\right]=\mathbb{E}_{d_{z}^{\star}(s,a)}\left[g(s)\right]=\mathbb{E}_{d_{O}(s,a)}\left[\eta_{z}(s,a)g(s)\right].
    \]
\end{proof}

\subsection{Empirical Estimators}\label{app:finite_sample_estimators}

Recall that the primal optimal solution  satisfies 
\[
\etaz:=\frac{d_{z}^{\star}(s,a)}{d_{O}(s,a)}=\mathrm{softmax}_{d_{O}(s,a)}\left(R_{z}^{\lambda}(s,a)+\gamma\mathcal{T}V_{z}^{\star}(s,a)-V_{z}^{\star}(s)\right),
\]
where 
\begin{equation}\label{app:def:softmax_dO}
\mathrm{softmax}_{p(x)}(g(x))=\frac{\exp\{g(x)\}}{\mathbb{E}_{p(x^{\prime})}[\exp\{g(x^{\prime})\}]}.
\end{equation}
In the rest of this section, we denote the above TD-error term by
\[
    \delta_{z}(s,a)=R_{z}^{\mu}(s,a)+\gamma\mathcal{T}V_{z}^{\star}(s,a)-V_{z}^{\star}(s).
\]
By assumption, the offline dataset $\mathcal{D}_{O}$ is sampled u.a.r. from a state-action occupancy distribution $d_{O}(s,a)$. 
Let $\{w_{z}(s,a)\}_{(s,a)\in\mathcal{D}_{O}}$ be a discrete probability distribution, computed by a softmax, over the offline dataset $\mathcal{D}_{O}$, namely
\[
 w_{z}(s,a)=\mathrm{softmax}_{\mathcal{D}_{O}}(\delta_{z}(s,a))=\frac{\exp\{\delta_{z}(s,a)\}}{\sum_{(s^{\prime},a^{\prime})\in\mathcal{D}_{O}}\exp\{\delta_{z}(s^{\prime},a^{\prime})\}}.
\]
We are now ready to state the main result of this section.

\newpage
\begin{lem}[KL-divergence Estimator]\label{app:lem:KL_div_Estimator}
The following expression
\[
\sum_{(s,a)\in\mathcal{D}_{O}}w_{z}(s,a)\big[\log w_{z}(s,a)-\log w_{\widetilde{E}}(s,a)\big]
\]
is an empirical estimator of the KL-divergence $\mathrm{D}_{\mathrm{KL}}\left(d_{z}(S,A)||\dE(S,A)\right)$.
\end{lem}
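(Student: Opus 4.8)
The plan is to show that the proposed finite-sample expression is the natural empirical (Monte-Carlo) counterpart of the population estimator $\phi_z$ established in \Cref{app:lem:SA-KL-Estimator}. The key observation is that the softmax weights $w_z(s,a)$ defined over the finite dataset $\mathcal{D}_O$ are precisely the empirical analogue of the population ratios $\eta_z(s,a)=\mathrm{softmax}_{d_O(s,a)}(\delta_z(s,a))$, and similarly $w_{\widetilde{E}}(s,a)$ corresponds to $\etae(s,a)$. I would begin by recalling from \Cref{app:lem:SA-KL-Estimator} that
\[
\mathrm{D}_{\mathrm{KL}}\left(d_{z}(S,A)||\dE(S,A)\right)=\mathbb{E}_{d_{O}(s,a)}\left[\eta_{z}(s,a)\log\frac{\eta_{z}(s,a)}{\etae(s,a)}\right].
\]

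Next, I would make explicit the relationship between the softmax over the continuous distribution $d_O$ and the softmax over the empirical sample. Since $\mathcal{D}_O$ is drawn i.i.d.\ from $d_O(s,a)$, the empirical measure $\tfrac{1}{|\mathcal{D}_O|}\sum_{(s,a)\in\mathcal{D}_O}\delta_{(s,a)}$ converges to $d_O$, so the normalizing denominator $\sum_{(s',a')\in\mathcal{D}_O}\exp\{\delta_z(s',a')\}$ approximates $|\mathcal{D}_O|\,\mathbb{E}_{d_O(s',a')}[\exp\{\delta_z(s',a')\}]$. Consequently $|\mathcal{D}_O|\,w_z(s,a)\to\eta_z(s,a)$ pointwise, and the same holds for $w_{\widetilde{E}}$. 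The crucial simplification is that the ratio $w_z(s,a)/w_{\widetilde{E}}(s,a)$ cancels the $|\mathcal{D}_O|$ factor exactly, so $\log\bigl(w_z(s,a)/w_{\widetilde{E}}(s,a)\bigr)$ estimates $\log\bigl(\eta_z(s,a)/\etae(s,a)\bigr)$ without any surviving normalization constant.

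I would then rewrite the target population expectation using the importance-sampling identity of \Cref{app:lem:IS}, namely $\mathbb{E}_{d_O(s,a)}[\eta_z(s,a)\,f(s,a)]=\mathbb{E}_{d_z^\star(s,a)}[f(s,a)]$ with $f(s,a)=\log(\eta_z(s,a)/\etae(s,a))$. Replacing the expectation $\mathbb{E}_{d_z^\star}$ by its empirical form weighted by $w_z$ over $\mathcal{D}_O$ (which plays the role of the self-normalized importance weights for the target occupancy $d_z^\star$) yields exactly
\[
\sum_{(s,a)\in\mathcal{D}_{O}}w_{z}(s,a)\big[\log w_{z}(s,a)-\log w_{\widetilde{E}}(s,a)\big],
\]
since $w_z$ sums to one by construction and thus serves as a self-normalizing estimator of $d_z^\star$.

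The main obstacle I anticipate is being precise about the sense in which this is an estimator: the $|\mathcal{D}_O|$ factors in the numerator and denominator of the two softmaxes must be shown to cancel so that no spurious $\log|\mathcal{D}_O|$ term appears, and one must argue that the self-normalized weights $w_z$ correctly implement the importance-weighting against $d_z^\star$ rather than against $d_O$. I would handle this by carefully expanding $\log w_z - \log w_{\widetilde{E}}$ and noting that the (common) log-normalizer differences either cancel or contribute a vanishing bias as $|\mathcal{D}_O|\to\infty$; establishing consistency rather than exact unbiasedness is the honest claim here, consistent with the informal phrasing ``empirical estimator'' in the statement.
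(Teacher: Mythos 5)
Your proposal is correct and follows essentially the same route as the paper's proof: both start from the population identity $\mathrm{D}_{\mathrm{KL}}(d_z \| d_{\widetilde{E}}) = \mathbb{E}_{d_O}[\eta_z \log(\eta_z/\eta_{\widetilde{E}})]$ of \Cref{app:lem:SA-KL-Estimator}, observe that $\widetilde{\eta}_z(s,a) = |\mathcal{D}_O|\,w_z(s,a)$ is the empirical estimator of $\eta_z(s,a)$ obtained by replacing the softmax normalizer $\mathbb{E}_{d_O}[\exp\{\delta_z\}]$ with its sample average, and note that the $|\mathcal{D}_O|$ factors cancel both in the log-ratio and against the outer empirical mean. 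Your additional care about the self-normalization and the absence of a spurious $\log|\mathcal{D}_O|$ term matches what the paper does implicitly in its final chain of equalities.
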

\begin{proof}
We estimate the expectation $\mathbb{E}_{d_{O}(s,a)}\exp\{\delta(s,a)\}$ using an empirical estimate $\frac{1}{|\mathcal{D}_{O}|}\sum_{(s,a)\in\mathcal{D}_{O}}\exp\{\delta_{z}(s,a)\}$ over the offline-dataset $\mathcal{D}_{O}$. 
By definition of $\mathrm{softmax}_{d_{O}(s,a)}$, see~\cref{app:def:softmax_dO}, the following expression
\[
\widetilde{\eta}_{z}(s,a)=\frac{\exp\{\delta_z(s,a)\}}{\frac{1}{|\mathcal{D}_{O}|}\sum_{(s^{\prime},a^{\prime})\in\mathcal{D}_{O}}\exp\{\delta_{z}(s^{\prime},a^{\prime})\}}=|\mathcal{D}_{O}|w_{z}(s,a)
\]
is an empirical estimator of the importance weight $\eta_{z}(s,a)$.
Similarly, $\widetilde{\eta}_{\widetilde{E}}(s,a)=|\mathcal{D}_O|w_{\widetilde{E}}(s,a)$ is an estimator of $\eta_{\widetilde{E}}(s,a)$.
Then, the statement follows by combining Lemma~\cref{app:lem:SA-KL-Estimator}, the definition of importance ratios $\eta_{z}(s,a)=d_z(s,a)/d_O(s,a)$, $\etae(s,a)=\dE(s,a)/d_O(s,a)$ and
\begin{eqnarray*}
D_{\mathrm{KL}}\left(d_{z}(S,A)||\dE(S,A)\right)&=&\mathbb{E}_{d_{O}(s,a)}\left[\eta_{z}(s,a)\log\frac{\eta_{z}(s,a)}{\etae(s,a)}\right]\\&\approx&\frac{1}{|\mathcal{D}_{O}|}\sum_{(s,a)\in\mathcal{D}_{O}}\widetilde{\eta}_{z}(s,a)\log\frac{\widetilde{\eta}_{z}(s,a)}{\widetilde{\eta}_{\widetilde{E}}(s,a)}\\&=&\sum_{(s,a)\in\mathcal{D}_{O}}w_{z}(s,a)\log\left(\frac{w_{z}(s,a)}{w_{\widetilde{E}}(s,a)}\right).
\end{eqnarray*}      
\end{proof}

\begin{lem}[Off-Policy Expectation Estimator]\label{app:lem:off-policy:expectation:estimator}
    For any function $f(s,a)$ the following expression
    \[
    \sum_{(s,a)\in\mathcal{D}_{O}}w_{z}(s,a)f(s,a)
    \]
    is an empirical estimator of the expectation $\mathbb{E}_{d_{z}^{\star}(s,a)}\left[f(s,a)\right]$.
\end{lem}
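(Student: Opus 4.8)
The plan is to derive the claimed empirical estimator by combining the exact importance-sampling identity of \cref{app:lem:IS} with the self-normalized softmax weights introduced in the proof of \cref{app:lem:KL_div_Estimator}. The only real work is to track how the empirical normalization of the partition function makes a factor of $|\mathcal{D}_O|$ appear and then cancel.

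First I would invoke \cref{app:lem:IS}, which provides the exact population-level identity
\[
\mathbb{E}_{d_{z}^{\star}(s,a)}\left[f(s,a)\right]=\mathbb{E}_{d_{O}(s,a)}\left[\eta_{z}(s,a)f(s,a)\right],
\]
valid for arbitrary $f$. Since the offline dataset $\mathcal{D}_O$ is sampled u.a.r. from $d_O(s,a)$, I would then replace the right-hand expectation by its Monte Carlo average over $\mathcal{D}_O$, namely $\frac{1}{|\mathcal{D}_O|}\sum_{(s,a)\in\mathcal{D}_O}\eta_z(s,a)f(s,a)$. Because $\eta_z$ is accessible only through the TD-error $\delta_z$ up to the unknown partition function $\mathbb{E}_{d_O}[\exp\{\delta_z\}]$, the next step is to substitute the empirical importance weight $\widetilde{\eta}_z(s,a)=|\mathcal{D}_O|\,w_z(s,a)$ that was established in the proof of \cref{app:lem:KL_div_Estimator}, where $w_z$ is the dataset softmax of $\delta_z$. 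Plugging this in gives $\frac{1}{|\mathcal{D}_O|}\sum_{(s,a)\in\mathcal{D}_O}|\mathcal{D}_O|\,w_z(s,a)f(s,a)$, and the two factors of $|\mathcal{D}_O|$ cancel, leaving exactly $\sum_{(s,a)\in\mathcal{D}_O}w_z(s,a)f(s,a)$, as claimed.

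I expect the argument to be essentially routine, so the main subtlety is not any genuine difficulty but rather justifying the clean cancellation of $|\mathcal{D}_O|$: this works because $w_z$ is a \emph{self-normalized} (ratio) estimator, in which the partition function in the denominator of $\eta_z$ is itself estimated from $\mathcal{D}_O$, so that the empirical normalization absorbs precisely the $1/|\mathcal{D}_O|$ from the Monte Carlo average. As a consistency check, I would note that this is exactly the mechanism already used to obtain the KL estimator in \cref{app:lem:KL_div_Estimator}: taking $f(s,a)=\log\!\big(\eta_z(s,a)/\etae(s,a)\big)$ recovers that result as a special case of the present statement.
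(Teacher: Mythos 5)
Your proposal is correct and follows exactly the paper's own argument: apply \cref{app:lem:IS} to rewrite $\mathbb{E}_{d_{z}^{\star}(s,a)}[f(s,a)]$ as $\mathbb{E}_{d_{O}(s,a)}[\eta_{z}(s,a)f(s,a)]$, take the Monte Carlo average over $\mathcal{D}_O$, and substitute the empirical weight $\widetilde{\eta}_z(s,a)=|\mathcal{D}_O|\,w_z(s,a)$ from the proof of \cref{app:lem:KL_div_Estimator} so the $|\mathcal{D}_O|$ factors cancel. Your added remark on why the self-normalization absorbs the $1/|\mathcal{D}_O|$ is a nice clarification but does not change the route.
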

\begin{proof}
    By combining \cref{app:lem:IS} and similar arguments as in the proof of \cref{app:lem:KL_div_Estimator}, we have
    \begin{eqnarray*}
        \mathbb{E}_{d_{z}^{\star}(s,a)}\left[f(s,a)\right]&=&\mathbb{E}_{d_{O}(s,a)}\left[\eta_{z}(s,a)f(s,a)\right]\\&\approx&\frac{1}{|\mathcal{D}_{O}|}\sum_{(s,a)\in\mathcal{D}_{O}}\widetilde{\eta}_{z}(s,a)f(s,a)\\&=&\sum_{(s,a)\in\mathcal{D}_{O}}w_{z}(s,a)f(s,a).
    \end{eqnarray*}
\end{proof}

\newpage
\section{Unconstrained Formulation}\label{app:sec:unconstrained}

SMODICE~\citep{ma2022smodice} minimizes a KL-divergence between the policy state occupancy and the expert state occupancy, expressed as
\begin{eqnarray}
\min_{d(S)} \Dkl \left(d(S)||d_{E}(S)\right).\label{app:eq:smodice}
\end{eqnarray}

A naive approach to extend the above problem formulation to the unsupervised skill discovery setting, is to consider an additional diversity term in the objective.
In particular, adding a scaled mutual information term $\gI(S;Z)$ and maximizing over a set of skill-conditioned state occupancies $\{d_z(S)\}_{z\in Z}$, namely

\begin{equation}\label{app:eq:uc-problem}
\max_{\{d_z(S)\}_{z\in Z}} \alpha \gI(S;Z) - \sum_{z \in Z} \Dkl \left(d_{z}(S)||d_{E}(S)\right).
\end{equation}

Here, the level of diversity is controlled by a hyperparameter $\alpha$.
However, $\alpha$ is arbitrary, and no constraint on closeness to the expert state occupancy is enforced.
We proceed by using the variational lower bound in \cref{eq:VarLB} and assuming a categorical uniform distribution $p(z)$ over the set of latent skills $Z$, which consists of $|Z|$ distinct indicator vectors in $\mathbb{R}^{|Z|}$.
This reduce the optimization problem to
\begin{equation}\label{app:eq:uc-problem}
\max_{{d_{z}(s), \cdiv{q(z|s)}}}\sum_{z\in Z}\left\{ \alpha\mathbb{E}_{d_{z}(s)}\left[\frac{\log\left(\cdiv{q(z|s)}|Z|\right)}{|Z|}\right]-\Dkl\left(d_{z}(S)||d_{E}(S)\right)\right\}.
\end{equation}

\begin{thm}\label{thm:kl-upper-bound} \citep{ma2022smodice} 
Suppose \cref{asm:base} holds. Then, we have
    \[
        \Dkl\left(d_{z}(S)\|d_{E}(S)\right)\le\E_{d_{z}(s)}\left[\log\frac{d_{O}(s)}{d_{E}(s)}\right]+\Dkl(d_{z}(S,A)\|d_{O}(S,A)).
    \]
\end{thm}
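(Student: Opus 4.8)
The plan is to turn the claimed inequality into an equality plus a nonnegative correction, by inserting the offline state occupancy $d_O(S)$ as an intermediate reference distribution and then invoking the monotonicity of KL divergence under marginalization.

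First I would expand the definition of the state-occupancy KL divergence and split the log-ratio through $d_O(s)$:
\[
\Dkl(d_z(S)\|d_E(S)) = \E_{d_z(s)}\left[\log\frac{d_z(s)}{d_E(s)}\right] = \E_{d_z(s)}\left[\log\frac{d_z(s)}{d_O(s)}\right] + \E_{d_z(s)}\left[\log\frac{d_O(s)}{d_E(s)}\right].
\]
The second summand is precisely the first term on the right-hand side of the claim, while the first summand is $\Dkl(d_z(S)\|d_O(S))$. Here \cref{asm:base}, together with the coverage of $d_z$ by $d_O$, ensures that all log-ratios are well defined (no division by zero).

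Second, I would upgrade this equality to the target inequality by establishing $\Dkl(d_z(S)\|d_O(S)) \le \Dkl(d_z(S,A)\|d_O(S,A))$, i.e.\ that marginalizing from state-action to state can only decrease the KL divergence. Writing the state-action occupancies as $d_z(s,a)=d_z(s)\pi_z(a|s)$ and $d_O(s,a)=d_O(s)\pi_O(a|s)$ for the policies that induce them, the chain rule for KL divergence gives
\[
\Dkl(d_z(S,A)\|d_O(S,A)) = \Dkl(d_z(S)\|d_O(S)) + \E_{d_z(s)}\left[\Dkl(\pi_z(A|s)\|\pi_O(A|s))\right].
\]
Since the conditional KL term is nonnegative, dropping it yields the desired bound. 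Substituting $\Dkl(d_z(S)\|d_O(S)) \le \Dkl(d_z(S,A)\|d_O(S,A))$ back into the decomposition from the first step completes the proof.

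There is no serious obstacle here; the argument is a routine rewriting of the KL definition followed by the data-processing inequality. The only step requiring mild care is the chain-rule decomposition, where one must ensure the conditional policies $\pi_z(A|s)$ and $\pi_O(A|s)$ are well defined on the support of $d_z$, which again is exactly what the coverage assumption provides.
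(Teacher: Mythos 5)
Your proposal is correct, and it is essentially the standard argument: the paper itself states this result without proof, deferring to the cited SMODICE paper, whose derivation is exactly your two steps (insert $d_O(S)$ as an intermediate reference to split the log-ratio, then bound $\Dkl(d_z(S)\|d_O(S))$ by $\Dkl(d_z(S,A)\|d_O(S,A))$ via the chain rule, dropping the nonnegative conditional term $\E_{d_z(s)}[\Dkl(\pi_z(\cdot|s)\|\pi_O(\cdot|s))]$). Your remark that one additionally needs $d_O$ to cover $d_z$ for the intermediate log-ratios to be finite is a fair observation and consistent with the paper's own caveat that \cref{asm:base} serves only to keep the occupancy ratios well defined.
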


By \cref{thm:kl-upper-bound} and linearity of the objective, Problem~\eqref{app:eq:uc-problem} reduces to optimizing separately for each latent skill $z$ the following optimization problem
\begin{equation}\label{final:problem}
    \max_{{d_{z}(s), \cdiv{q(z|s)}}}\mathbb{E}_{d_{z}(s)}\left[R_{z}^{\alpha}(s,a)\right]-\Dkl(d_{z}(S,A)\|d_{O}(S,A)),
\end{equation}
where $R_{z}^{\alpha}(s,a)$ is defined as
\begin{equation}\label{eqn:reward-uc}
R_{z}^{\alpha}(s,a):=
\underbrace{\vphantom{\frac{Z}{\|\lambda_z}}\log\cdem{\frac{d_E(s)}{d_O(s)}}}_{\text{Expert Imitation}} + 
\underbrace{ \alpha 
 \frac{\log\left(\cdiv{q(z|s)}|Z|\right)}{|Z|}}_{\text{Skill Diversity}}.
\end{equation}

The ratios $\cdem{\frac{d_E(s)}{d_O(s)}}$ can be computed by training a discriminator $c(s)$ tasked to distinguish between samples from $d_E(s)$ and $d_O(s)$.
More specifically, since the optimal Bayesian discriminator satisfies $c^{\star}(s)=d_E(s)/(d_E(s)+d_O(s))$, in practice we can use an estimator $c(s)/(1-c(s))\approx \cdem{\frac{d_E(s)}{d_O(s)}}$.

Similar to the \method{}, we can apply the alternating optimization scheme, here with two phases:(i) fixed skill-discriminator (similarly to \cref{eq:LB_fixed_discr_offline}); and 
(ii) fixed importance ratios and policy $\pi_z^{\star}$, where we train the skill-discriminator $\cdiv{q(z|s)}$ (see \cref{app:sec:train-discriminator}).
For the first phase, we use the importance ratios $\eta_z(s,a)$ computed by optimizing the dual-value problem and then applying softmax to the corresponding TD error terms (see \cref{eq:dopt} and \citet{nachum2020reinforcement, ma2022smodice}).

\section{Solo-12 Dataset Collection}\label{app:sec:dataset-collection}

\begin{figure*}[htbp]
    \centering
    \includegraphics[width=0.9\linewidth]{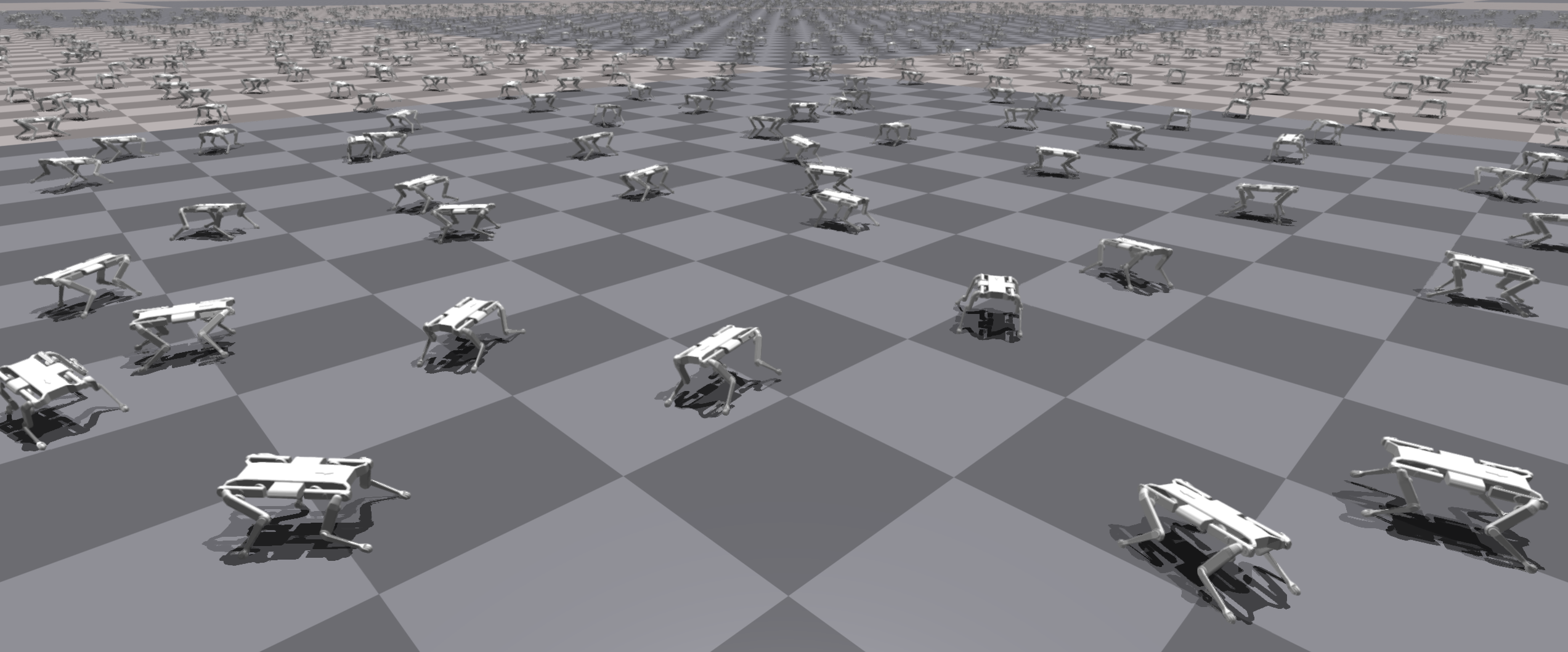}
    \caption{{\small Solo-12 datasets are collected with 4000 environments in parallel using IsaacGym.}}
    \label{fig:isaacgym_collect}
\end{figure*}

As shown in~\cref{fig:isaacgym_collect}, both \emph{expert dataset} and \emph{offline dataset} are collected in parallelized GPU-based environments in Isaac Gym \citep{makoviychuk2021isaac}.
The policies from both locomotion task and obstacle navigation tasks with \envsolo{} are trained using the DOMiNiC~\citep{cheng2024dominic} algorithm to exhibit diverse behaviors while maintaining a certain level of task completion. 
For details on the algorithm used to train the data collection policies, we refer interested readers to \citep{cheng2024dominic}.

\paragraph{Locomotion task.} 
The collecting policies are trained to track randomly sampled velocity commands on the flat ground. 
The state space consists of the linear and angular base velocity vectors, projected gravity vector, joint position, and velocity and commanded velocity.
The actions contain the joint target angles, which will be taken by a PD controller to generate applied torque for each motor.
During collecting, the policies are fed with a fixed forward velocity command of 1 m/s, and zeros for side velocity and yaw rate. 
As mentioned in \cref{sec:experiments}, the policy used for collecting the \emph{expert dataset} is the last and best checkpoint (iteration 2000) and trained without diversity objective, which exhibits a stable mid-height trotting gait pattern. 
The policies for collecting the \emph{offline dataset} are different stochastic checkpoints throughout the training of the skill-conditioned policy. 
The intrinsic reward is designed to maximize the $\ell_2$ distance of the successor features \citep{barreto2016successor} between distinct skills, where in this setting the feature space includes: the base height velocity, base roll and pitch velocities, and feet height velocities. 
The \emph{offline dataset} is composed of 1/2 data from checkpoint 0, 1/4 data from checkpoint 50, 1/8 data from checkpoint 100, 1/16 data from checkpoint 500, 1/32 data from checkpoint 1500 and 1/32 data from checkpoint 2000. 
For each policy checkpoint, we collect data from the 5 corresponding skills, including the target skill.
It is worth noting that more than half of the data from the \emph{offline dataset} comes from the nearly random policies from the start of the training (checkpoints 0 and 50).
Both datasets contain 4000 trajectories with an episode length of 250 steps, or 1 million transitions each.

\paragraph{Obstacle navigation task.}
The policies are trained to track the target position in a terrain of random obstacles of various heights of $\{0.0, 0.05, \dots, 0.25\}$ meters within a fixed time horizon. 
The state space of the agent contains the linear and angular base velocity vectors, projected gravity vector, joint position and velocity, a surrounding height map of the robot and time information, while the actions remain the same as the locomotion task.
During data collection, the policies are tasked with tracking the target 3.0 meter away in the front direction while confronting a $1.0\times1.0$ meter square obstacle of 0.2 meter height. 
The intrinsic reward for training the policy is designed to diversify the base velocity direction such that distinct skills exhibit diverse strategies.
For the \emph{expert dataset}, the used policy is the last and best checkpoint (iteration 2000) trained with diversity objective. 
The \emph{expert dataset} is multi-modal in nature, as the dataset contains diverse strategies for navigating in front of the obstacle, either avoiding it from both sides or climbing it.
On the other hand, the policies for collecting the \emph{offline dataset} are the skill-conditioned checkpoints from iterations $\{0, 50, 100, 150, 200, 250, 500, 1000, 1500, 2000\}$. 
Both datasets contain 2000 trajectories with an episode length of 500 steps, or 1 million transitions each.

\paragraph{Sim-to-Real transfer.} In addition, we use domain randomization during training and data collection, in order to tackle the sim-to-real transfer and to simulate more diverse environment interaction.
Specifically, we randomize the friction coefficient between $[0.5, 1.5]$, additional base mass between $[-0.5, 0.5]$ kg, and simulate the observation noise and an actuator lag of 15 ms. 

\section{SMODICE Expert Return}\label{app:sec:smodice-results}

In \cref{app:tab:smodice-return} we show the performance of the evaluated policies trained by SMODICE\citep{ma2022smodice} on the \envwalker{} and \envhalfcheetah{}.
The results are consistent with the performance that we obtain with \method{} in \cref{fig:d4rl-results}. 
We also note here the importance of having expert state coverage in the offline data that is reflected in the performance of the policies.

\begin{table}[htb]
    \centering
    \begin{tabular}{ccc|c}
\toprule
       Environment  &    dataset    &  $N$   &     $r$ \\
\midrule
halfcheetah & medium-expert & 25  &   81.25 \\
         &        & 50  &   80.47 \\
         &        & 200 &   73.56 \\
         & medium-replay & 25  &   29.28 \\
         &        & 50  &   36.73 \\
         &        & 200 &   60.67 \\
         & random & 25  &   10.89 \\
         &        & 50  &   27.71 \\
         &        & 200 &   78.94 \\
walker2d & medium-expert & 25  &    3.98 \\
         &        & 50  &   19.22 \\
         &        & 200 &    4.10 \\
         & medium-replay & 25  &   15.09 \\
         &        & 50  &    3.60 \\
         &        & 200 &    0.95 \\
         & random & 25  &   52.62 \\
         &        & 50  &  103.52 \\
         &        & 200 &  108.20 \\
\bottomrule
\end{tabular}
    
    \caption{Expected return for SMODICE-learned expert policies in the \envwalker{} and \envant{} environments for $N$ expert trajectories mixed-in.}
    \label{app:tab:smodice-return}
\end{table}

\section{Lagrange Multiplier Stability}\label{app:sec:lagrange-stability}

In \cref{fig:constraints} we observe the behavior of the Lagrange multipliers for different levels of $\epsilon$ for a specific skill $z$ in the \envsolo{} experiment.
In case of $\epsilon \in \{1.0, 2.0\}$, the multipliers fluctuate around a specific level that strikes the balance between diversity and expert imitation.
This can also be validated when observing the violation level in \cref{fig:violation} of the constraint given estimator $\phi_z$, which is for $\epsilon \in \{1.0, 2.0\}$ around $0$.
On the other hand, if we introduce a strong constraint on the KL-divergence ($\epsilon=0.0)$, which is constantly violated, hence $\sigma(\mu_z) = 1$. 
Similarly, if the constraint is too weak, only diversity is optimized, in which case there is a significant degradation in performance (see figure \cref{fig:div-and-task}).

\begin{figure}[htbp]
    \centering
    \legend{}
    \vspace{1em}
    \begin{subfigure}[b]{0.4\textwidth}
        \centering
        \includegraphics[width=\textwidth]{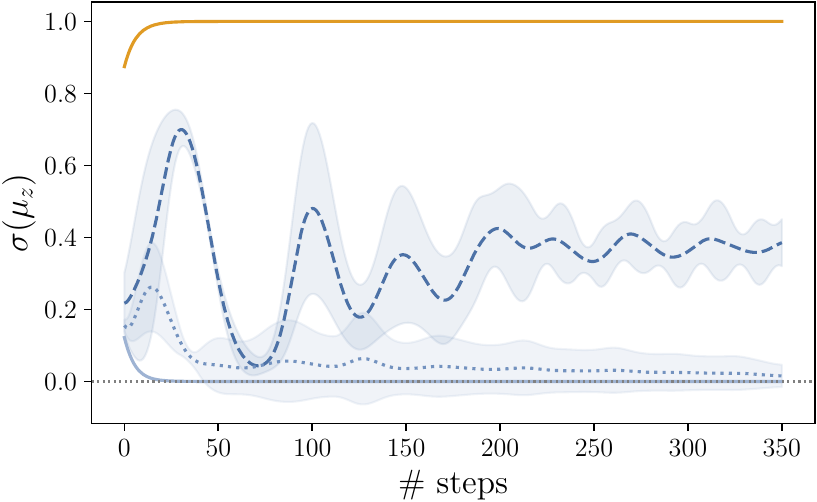}\\
        \caption{}
        \label{fig:lambda_z}
    \end{subfigure}
    \hspace{1em}
    \begin{subfigure}[b]{0.4\textwidth}
        \centering
        \includegraphics[width=\textwidth]{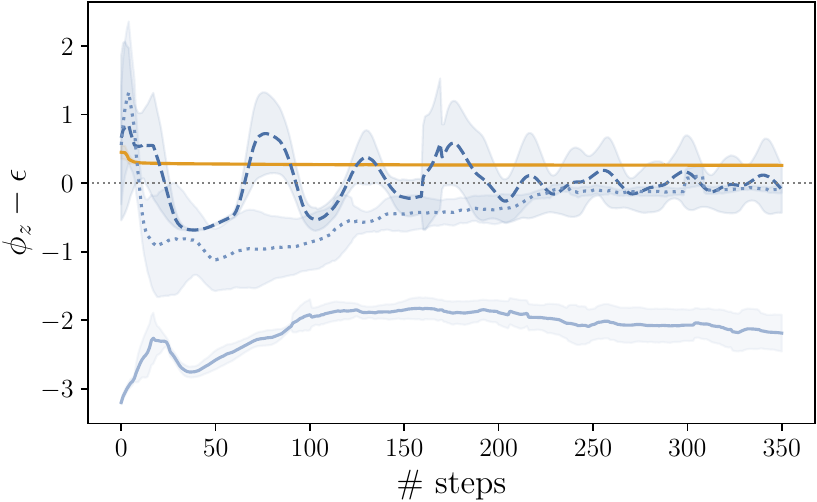}\\
        \caption{}
        \label{fig:violation}
    \end{subfigure}
    \caption{{\small Behavior of Lagrange multipliers. (a)~Evolution of $\sigma(\lambda_z)$ for one skill ($z=1$ chosen arbitrarily), (b) violation of the constraint  for different $\epsilon$. Negative $\phi_z - \epsilon$ indicates no violation.
    Means and standard deviation across restarts.
    The shaded areas show the interval between the 0.25 and 0.75 quantiles, computed over \numSeeds seeds.}}
    \label{fig:constraints}
\end{figure}

In \cref{app:fig:constraints-ant} we show the bounded lagrange multiplier values for three skills and the resulting violations for different $\epsilon$ levels for the \envant{} experiment.
Again, the multiplier values fluctuate around appropriate levels ensuring the the violation of the constraint remains close to 0.

\begin{figure}
    \centering
    \legenddrl{}
    \vspace{1em}
    \begin{subfigure}[b]{0.4\textwidth}
        \centering
        \includegraphics[width=\textwidth]{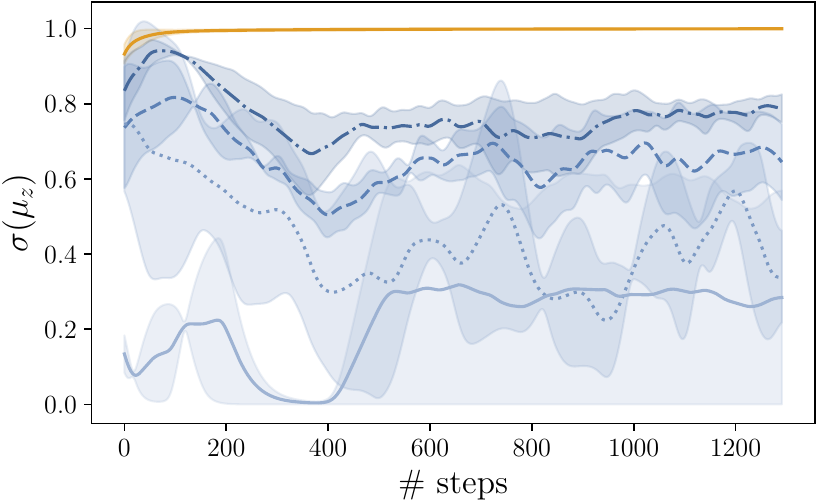}\\
        \caption{}
    \end{subfigure}
    \hspace{1em}
    \begin{subfigure}[b]{0.4\textwidth}
        \centering
        \includegraphics[width=\textwidth]{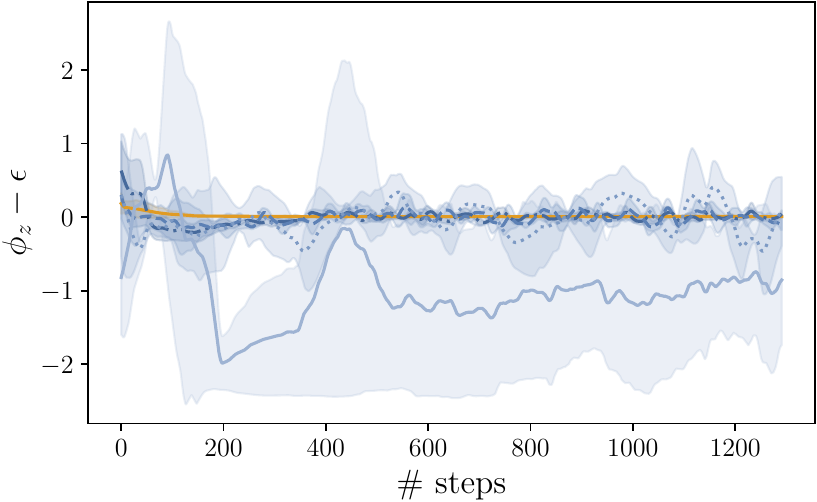}\\
        \caption{}
    \end{subfigure}
    \begin{subfigure}[b]{0.4\textwidth}
        \centering
        \includegraphics[width=\textwidth]{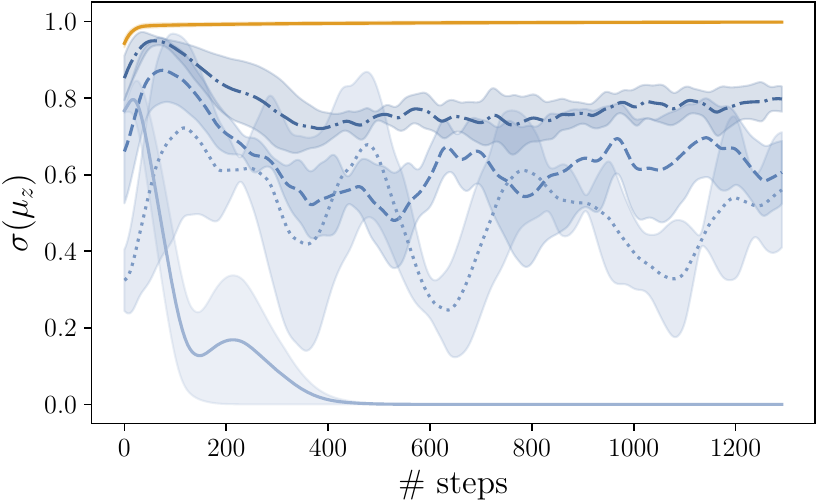}\\
        \caption{}
    \end{subfigure}
    \hspace{1em}
    \begin{subfigure}[b]{0.4\textwidth}
        \centering
        \includegraphics[width=\textwidth]{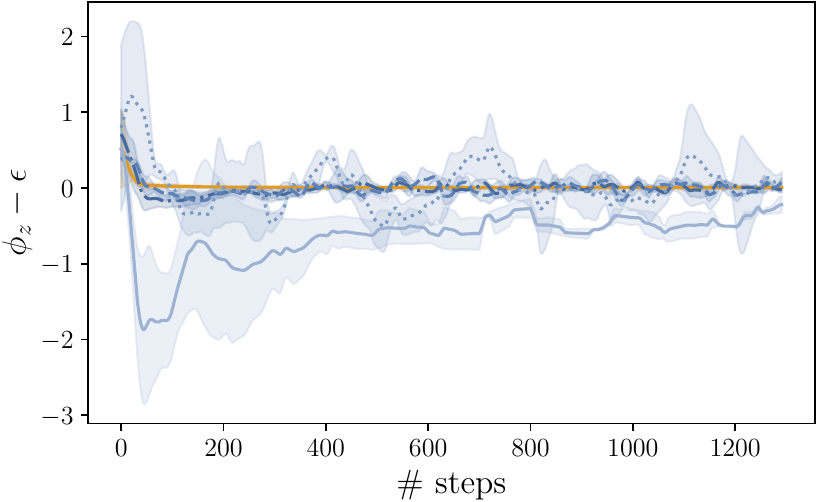}\\
        \caption{}
    \end{subfigure}
    \begin{subfigure}[b]{0.4\textwidth}
        \centering
        \includegraphics[width=\textwidth]{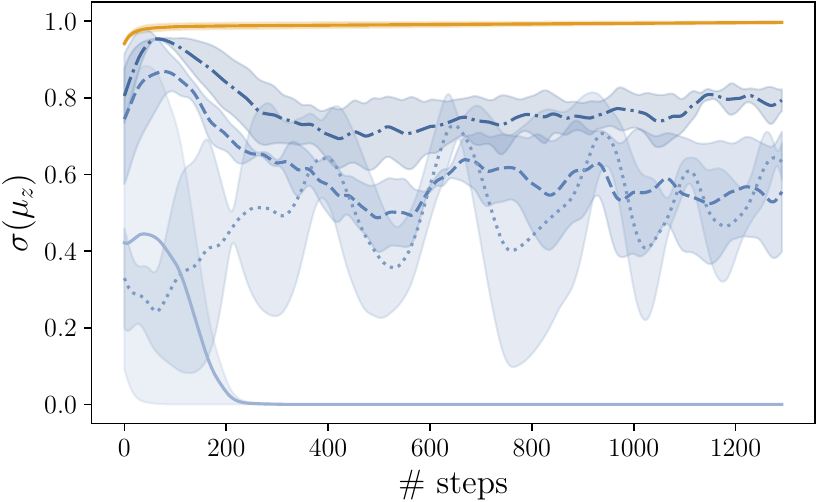}\\
        \caption{}
    \end{subfigure}
    \hspace{1em}
    \begin{subfigure}[b]{0.4\textwidth}
        \centering
        \includegraphics[width=\textwidth]{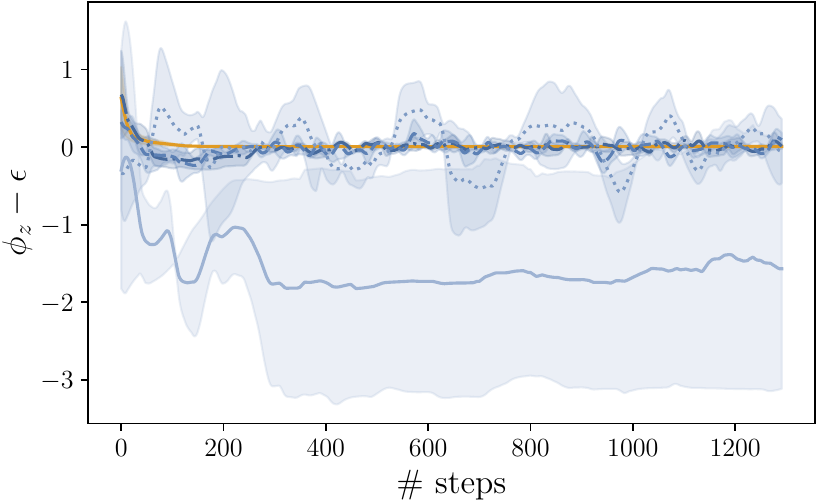}\\
        \caption{}
    \end{subfigure}
    \caption{{\small Behavior of Lagrange multipliers. (a)~Evolution of $\sigma(\lambda_z)$ for one skill ($z=1$ chosen arbitrarily), (b) violation of the constraint  for different $\epsilon$. Negative $\phi_z - \epsilon$ indicates no violation.
    Means and standard deviation across restarts.
    The shaded areas show the interval between the 0.25 and 0.75 quantiles, computed over \numSeeds seeds.}}
    \label{app:fig:constraints-ant}
\end{figure}

\section{Real Robot Deployment}\label{app:sec:real-robot-experiments}

For the locomotion task, we successfully deployed policies exhibiting diverse skills extracted from the \emph{offline dataset} while being able to track a certain velocity similar to the expert on real hardware. 
Our skill-conditioned policy exhibits different walking behaviors with diverse base motions. 
Snapshots of these diverse behaviors can be seen in \cref{fig:skills-on-solo}.

\begin{figure}[htbp]
    \begin{subfigure}[b]{\textwidth}
        \centering
        \includegraphics[width=\textwidth]{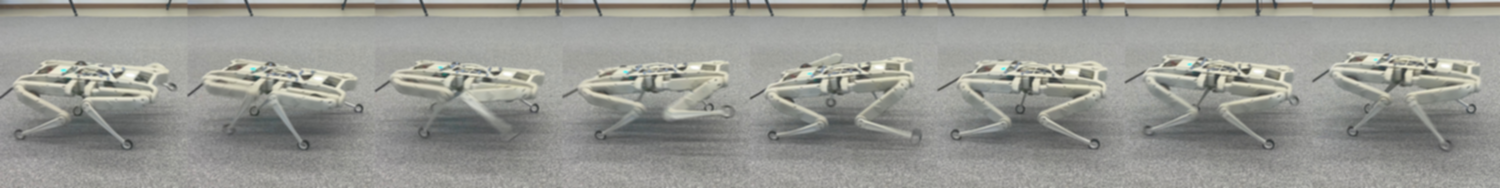}
        \caption{Trot locomotion with wave trunk motion and low base height.}
        \vspace{0.1cm}
    \end{subfigure}
    \begin{subfigure}[b]{\textwidth}
        \centering
        \includegraphics[width=\textwidth]{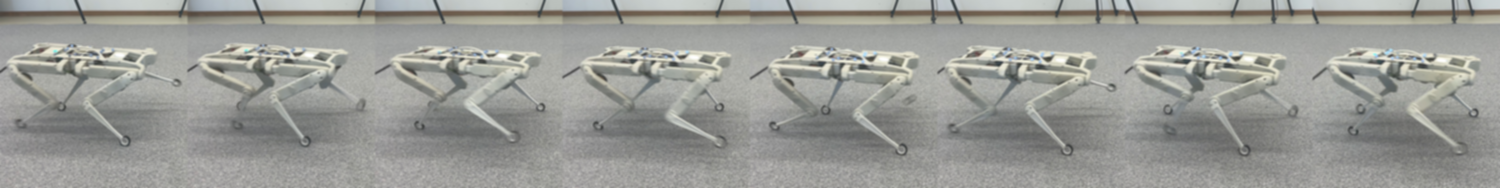}
        \caption{Trot locomotion with middle base height.}
        \vspace{0.1cm}
    \end{subfigure}
    \begin{subfigure}[b]{\textwidth}
        \centering
        \includegraphics[width=\textwidth]{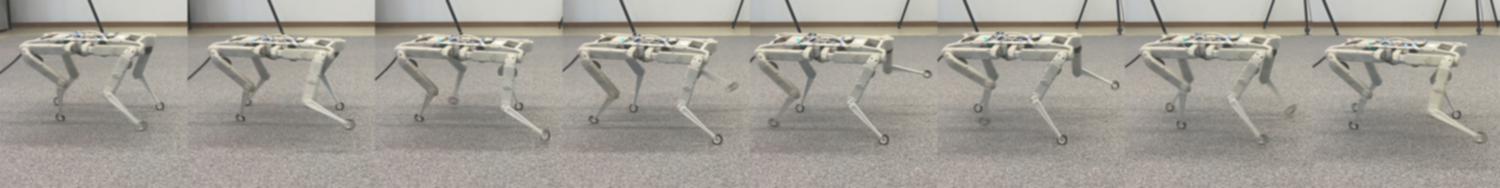}
        \caption{Trot locomotion with high base height.}
        \vspace{0.1cm}
    \end{subfigure}
    \caption{{\small Snapshots of the trained policy exhibiting distinct skills on hardware. From above to bottom, the policy has low, middle and high base positions while moving forward.}}
    \label{fig:skills-on-solo}
\end{figure}

\section{Observation Projection}\label{app:sec:observation-projection}

Imitation learning is of particular interest when the agent's and the target expert policy's state spaces do not necessarily match, but overlap in certain parts, as is often the case when learning from demonstrations.
Our framework naturally accounts for this.
If we consider $\gS'$ to be the state space of the expert and $\gS$ the state space of the agent, we assume that there exists a simple projection mapping $\Pi: \gS' \mapsto \gO$, where $\gO := \{o : o \subset s,  s \in \gS \}$ is the power set of observations, allowing us to potentially imitate beyond expert policies with the same state space as the agent.
Note that the agent still observes its full state $s$, however the projected state $\Pi(s)$ is observed by the expert classifier and skill discriminator.
The projection $\Pi$ can be selected to specify which parts of the state we want to diversify and constrain in terms of occupancy, depending on the task at hand.

\section{Limitations}\label{app:sec:limitations}

The \method{} method also comes with certain caveats. 
Maximizing the mutual information, as a diversity objective, poses a hard optimization problem due to its convexity.
Thus, designing alternative diversity objectives can be beneficial.
Furthermore, closeness in state-action occupancy can be quite restrictive in terms of availability of diverse behaviors that satisfy the constraint. 
Replacing this with constraints on the return of the policy would allow more freedom to optimize diversity in cases where the optimal policy may be multimodal.
The above challenges are promising directions for future work.

\newpage
\section{Robust Obstacle Navigation}\label{app:robustness} 

When the expert data is multi-modal, some modes might be more robust to distribution shift than others.
However, using a uni-modal algorithm such as SMODICE, which tries to match the expert's state occupancy distribution, may not result in a robust policy.
In contrast, each learned DOI skill recovers a particular mode, and as shown in this experiment, at least one DOI skill is robust against a distribution shift.

We consider the task of navigating across a box obstacle to a target position behind it, for the \envsolo{} robot.
For training the DOI skills, we choose the feature vector $\phi(s)$ with linear and angular velocity as input to the skill-discriminator $q(z|\phi(s))$.
The agent used to collect the \emph{expert dataset} can go over or around the box obstacle from the left or right side to reach the target position in the traversable obstacle terrain.
The box has a height of 0.2 meters and a square size of $1.0 \times 1.0$ meters.
As a result, the collected expert data is multi-modal and consists of trajectories over and from the sides of the box obstacle to the target position.

It is important to emphasize that the less direct route to the target position (left or right side of the box) is always the more robust choice, since the agent runs into the risk of slipping or falling while climbing the box.
We evaluate the learned DOI skills and {\color{ourorange} SMODICE expert} on 6 different heights: $\{0.1, 0.2, 0.3, 0.4, 0.5, 0.6\}$ meters. 
The $\{0.3, 0.4, 0.5, 0.6\}$ meters boxes are out-of-distribution and increasingly difficult to traverse from above the box.
In \cref{fig:box_arrow_complete}, we observe the trajectory distributions of the DOI skills and {\color{ourorange} SMODICE expert} collected in simulation.
The arrows indicate the yaw angle of the robot at the trajectory points.

As we can see from the return distributions in \cref{fig:solo12-box-dist}, the performance of the {\color{ourorange} SMODICE expert} is strongly affected by the height of the box, as it is biased towards climbing over the box (this also depends on the initial state of the agent), which becomes increasingly difficult and may not be feasible.
This can be observed from the trajectory distribution shown in the right-most column of \cref{fig:box_arrow_complete}; the trajectories of the {\color{ourorange} SMODICE expert} become increasingly concentrated in front of the box as its height increases.
On the other hand, the three DOI skills (learned with a fixed Lagrange multiplier $\sigma(\mu) = 0.5$) recover diverse behaviors and robustly reach the goal.
Here it is {\color{ourblue} DOI-Skill 3}, which is the most robust in reaching the target position and gives the highest return (see \cref{fig:box_arrow_complete} and \cref{fig:solo12-box-dist}).

In \cref{fig:box_arrow_complete}, each row corresponds to a box with a fixed height $H\in\{0.1, 0.2, 0.3, 0.4, 0.5, 0.6\}$ meters.
Each of the first three columns is associated with a fixed DOI skill ({\color{ourred} red}, {\color{ourgreen} green}, and {\color{ourblue} blue}) and the last column represents the {\color{ourorange} SMODICE expert}.
Each cell shows every 10th step of 60 randomly initialized trajectories, all computed in simulation.
This experiment demonstrates that although {\color{ourorange} SMODICE expert} is multimodal, it gets stuck in front of the box and fails to robustly reach the target position already at a box height of 0.4 meters.
In contrast, the {\color{ourblue} DOI-Skill 3} robustly reaches the target position by bypassing the box from the left side.
The fraction of randomly initialized trajectories stuck in front of the box is significantly smaller for the {\color{ourblue} DOI-Skill 3} than the {\color{ourorange} SMODICE expert}.
This is reflected in the return distribution shown in \Cref{fig:solo12-box-dist}, which has the same row and column structure as \Cref{fig:box_arrow_complete}.

\begin{figure}[h!]
    \centering
    \includegraphics[scale=0.9]{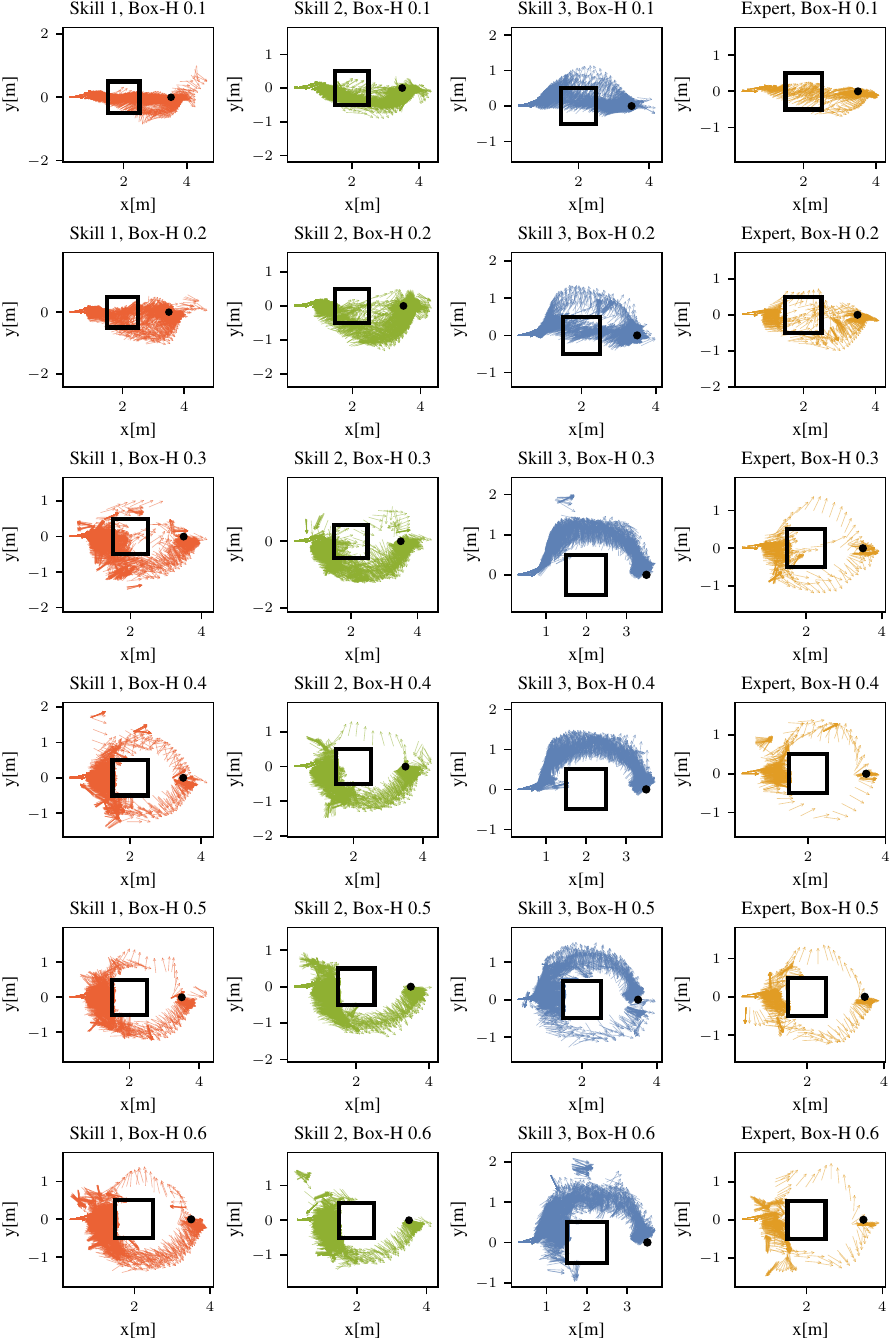}
    \caption{A performance benchmark of the DOI skills and the {\color{darkorange} SMODICE expert} on an obstacle navigation task, where the \envsolo{} is initialized in front of a box and tries to reach a target position behind the box.
    The task consists of six levels of increasing difficulty depending on the height of the box.}
    \label{fig:box_arrow_complete}
\end{figure}

\newlength{\gvs}
\begin{figure}
    \centering
    \includegraphics{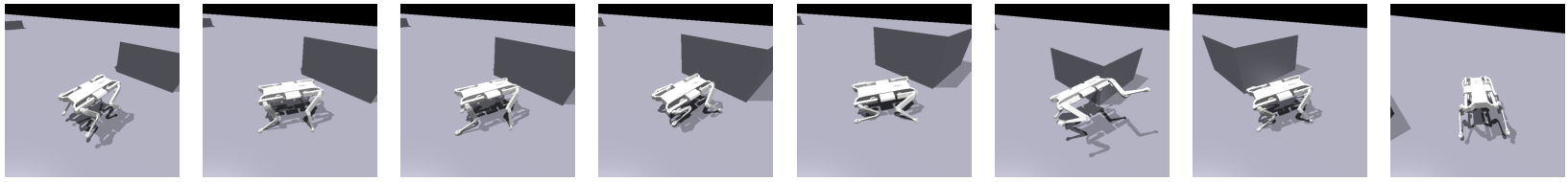}
    
    Skill 1
    
    \vspace{\gvs}    
    \includegraphics{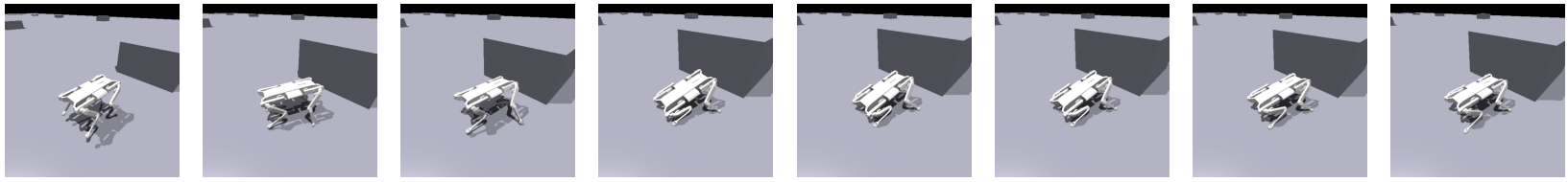}
    
    Skill 2
    
    \vspace{\gvs}
    \includegraphics{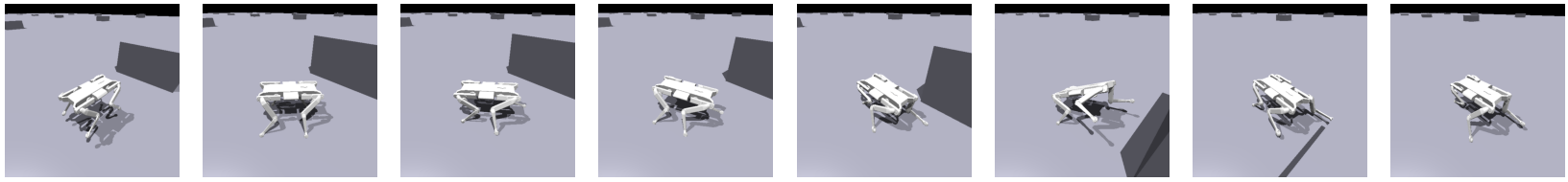}
    
    Skill 3
    
    \vspace{\gvs}
    \caption{Frames from rollout videos of the learned DOI skills for the highest box task, skills 1 and 3 go from the side of the boxes to the goal, and skill 2 reimains in front of the box since it mostly tries to climb it.}
    \label{fig:box-video-frames}
\end{figure}

\begin{figure}
    \centering
    \includegraphics{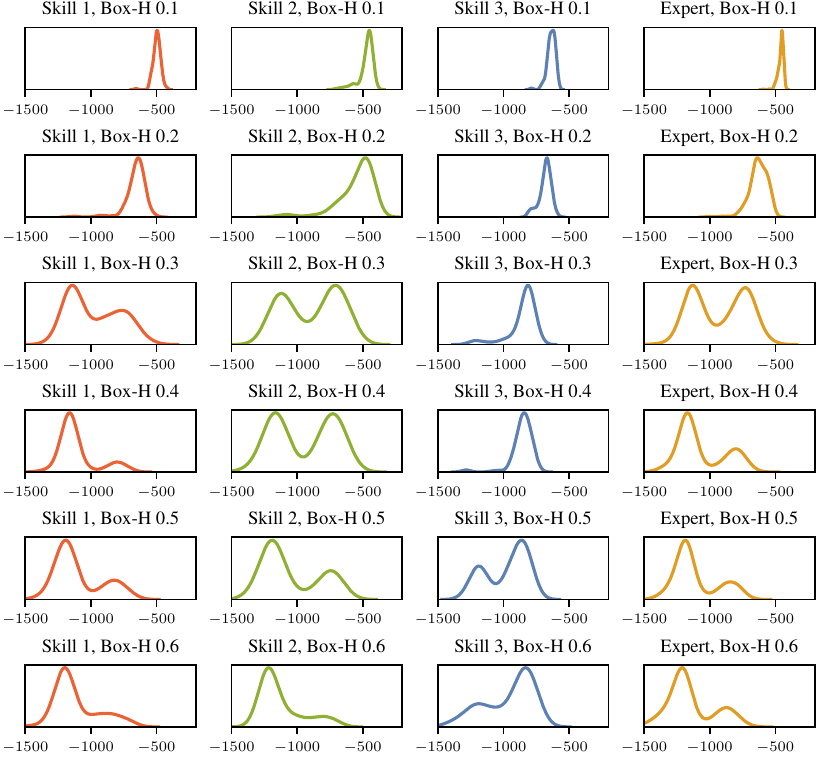}
    \caption{Return distributions for DOI skills and {\color{ourorange} SMODICE}, we see in particular that the SMODICE policy return distribution is greatly affected by increasing the height of the box.}
    \label{fig:solo12-box-dist}
\end{figure}

\section{Additional Experiments}\label{app:sec:additional-experiments}

Instead of learning the Lagrange multipliers $\lambda_z$ via KL estimators $\phi_z$, we can also fix $\lambda_z$ at a certain level, making it a hyperparameter.
In our setting, this also works well, and we demonstrate a tradeoff between diversity and task reward optimization, see \cref{app:fig:div-and-task-fixed,app:fig:constraints-fixed}.
However, in this case we lose the possibility to enforce a certain constraint on the KL-divergence between the skill state-action occupancy and expert state-action occupancy.

\begin{figure}[htbp]
    \centering
    \legendfixed{}\\
    \vspace{1em}
    \begin{subfigure}[b]{0.45\textwidth}
        \centering
        \includegraphics[width=\textwidth]{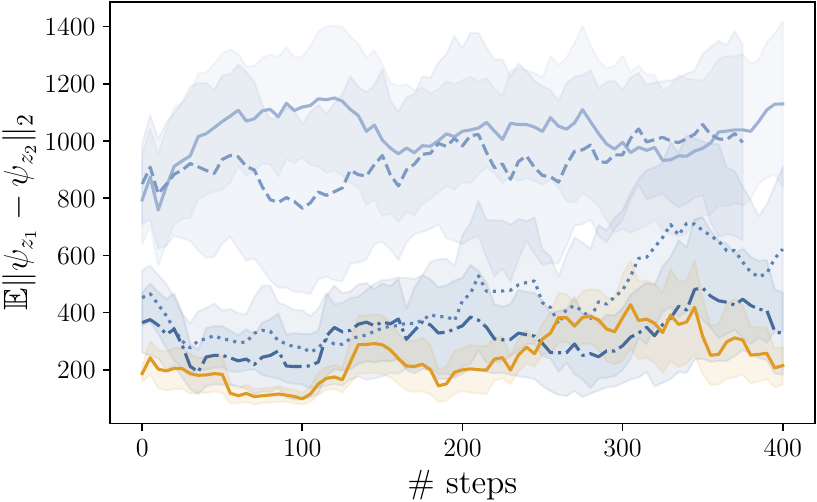}\\
        \caption{}
        \label{app:fig:task-reward-fixed}
    \end{subfigure}
    \hspace{1em}
    \begin{subfigure}[b]{0.45\textwidth}
        \centering
        \includegraphics[width=\textwidth]{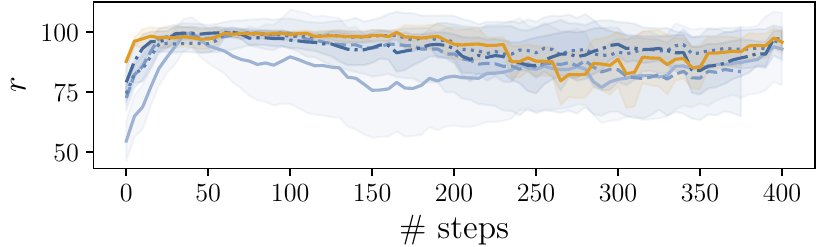}\\
        \includegraphics[width=\textwidth]{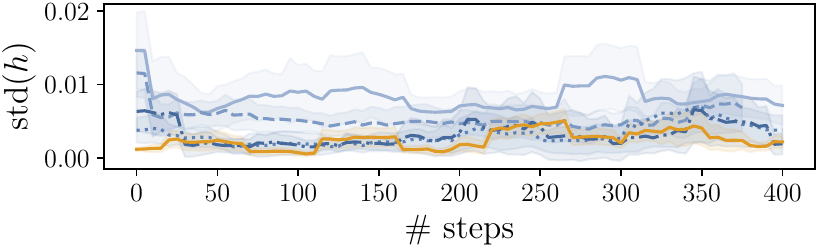}
        \caption{}
        \label{app:fig:height-std-fixed}
    \end{subfigure}
    \caption{{\small (a)~Average $\ell_2$ distance between Monte Carlo estimated successor representations $\psi_z$ of distinct skills,  (b) return $r$ as $\%$ of expert return and standard deviation of base height $\mathrm{std}_z(h)$, depending on a fixed $\sigma(\lambda_z)$ (see legend).
    The shaded areas show the interval between the 0.25 and 0.75 quantiles, computed over \numSeeds seeds.}}
    \label{app:fig:div-and-task-fixed}
\end{figure}

\begin{figure}[htbp]
    \centering
    \legendfixed{}\\
    \vspace{1em}
    \begin{subfigure}[b]{0.45\textwidth}
        \centering
        \includegraphics[width=\textwidth]{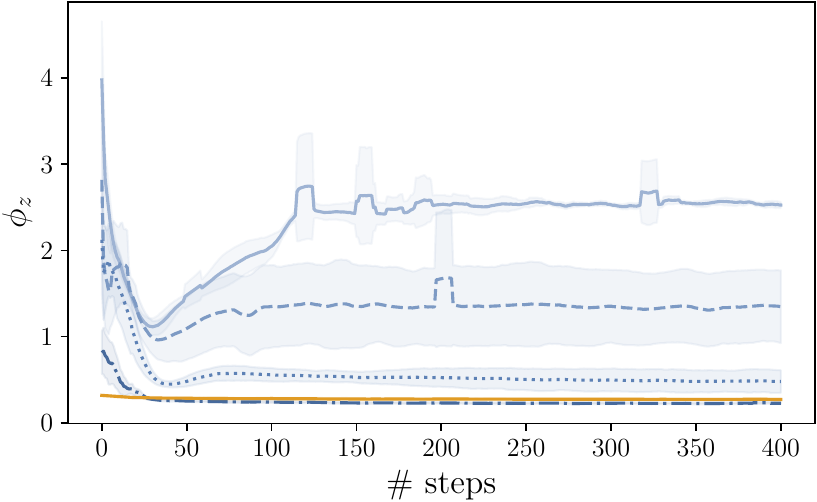}\\
        \caption{}
        \label{app:fig:violation-fixed}
    \end{subfigure}
    \hspace{1em}
    \begin{subfigure}[b]{0.45\textwidth}
        \centering
        \includegraphics[width=\textwidth]{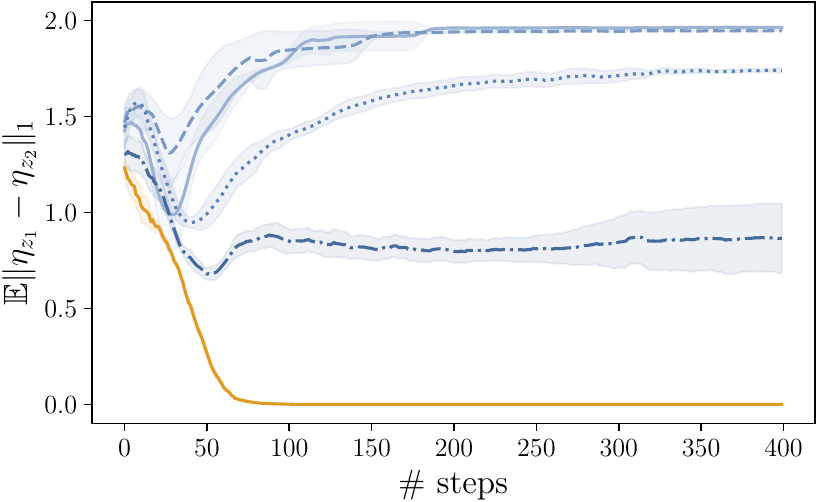}\\
        \caption{}
        \label{app:fig:eta-l1-fixed}
    \end{subfigure}
    \caption{{\small Divergence estimate and $\eta_z$ distance for the case of fixed $\sigma(\lambda_z)$. (a)~Value of divergence estimator $\phi_z$ for a specific skill over the course of training ($z=1$ chosen arbitrarily), (b) average $\ell_1$ distance of $\eta_z$'s of skills.
    Means and standard deviation across restarts.
    The shaded areas show the interval between the 0.25 and 0.75 quantiles, computed over \numSeeds seeds.}}
    \label{app:fig:constraints-fixed}
\end{figure}

We further provide results of applying \method{} to different levels of expert trajectory mix-in to the \emph{medium-replay} and \emph{random} datasets of  \envwalker{} and \envhalfcheetah{} in \cref{tab:walker-additiona-results,tab:halcheetah-additional-results}.

\begin{table}[htb]
    \centering
\begin{tabular}{ccc|ccc}
\toprule
       dataset   &  \# expert mixin   &   $\epsilon$ &     $\mathbb{E} \|\eta_{z_1} - \eta_{z_2}\|$ &            $r$ &       $\mathbb{E} \|\psi_{z_1} - \psi_{z_2}\|$ \\
\midrule
medium-replay & 25  & 0.0 &  0.00 $\pm$ 0.00 &   46.00 $\pm$ 1.46 &   6.16 $\pm$ 0.30 \\
       &     & 0.5 &  0.21 $\pm$ 0.08 &    0.33 $\pm$ 0.48 &   3.54 $\pm$ 2.14 \\
       &     & 1.0 &  1.40 $\pm$ 0.05 &    2.33 $\pm$ 0.51 &   6.09 $\pm$ 2.40 \\
       &     & 2.0 &  1.30 $\pm$ 0.03 &    0.64 $\pm$ 0.11 &   7.67 $\pm$ 4.27 \\
       &     & 4.0 &  1.54 $\pm$ 0.08 &    2.30 $\pm$ 1.64 &  19.26 $\pm$ 2.29 \\
       & 50  & 0.0 &  0.00 $\pm$ 0.00 &   54.29 $\pm$ 2.13 &   5.53 $\pm$ 0.14 \\
       &     & 0.5 &  0.82 $\pm$ 0.28 &   31.31 $\pm$ 7.03 &  14.13 $\pm$ 1.86 \\
       &     & 1.0 &  1.21 $\pm$ 0.15 &    4.33 $\pm$ 0.75 &   0.42 $\pm$ 0.05 \\
       &     & 2.0 &  1.37 $\pm$ 0.03 &    1.61 $\pm$ 0.41 &  13.85 $\pm$ 2.50 \\
       &     & 4.0 &  1.48 $\pm$ 0.12 &    1.11 $\pm$ 0.36 &  22.02 $\pm$ 1.33 \\
       & 200 & 0.0 &  0.00 $\pm$ 0.00 &   98.33 $\pm$ 0.44 &   2.67 $\pm$ 0.26 \\
       &     & 0.5 &  0.45 $\pm$ 0.11 &   74.59 $\pm$ 8.96 &   6.22 $\pm$ 1.17 \\
       &     & 1.0 &  1.20 $\pm$ 0.09 &    2.52 $\pm$ 1.50 &  12.97 $\pm$ 4.33 \\
       &     & 2.0 &  1.30 $\pm$ 0.03 &    2.07 $\pm$ 0.65 &   3.23 $\pm$ 2.02 \\
       &     & 4.0 &  1.59 $\pm$ 0.06 &    1.43 $\pm$ 0.64 &  19.48 $\pm$ 1.43 \\
    \midrule
    random & 25  & 0.0 &  0.00 $\pm$ 0.00 &  36.49 $\pm$ 11.54 &  15.70 $\pm$ 0.48 \\
       &     & 0.5 &  0.93 $\pm$ 0.02 &   20.48 $\pm$ 7.90 &  16.81 $\pm$ 3.14 \\
       &     & 1.0 &  1.30 $\pm$ 0.12 &    3.72 $\pm$ 1.38 &   8.16 $\pm$ 5.43 \\
       &     & 2.0 &  1.45 $\pm$ 0.09 &    1.22 $\pm$ 0.32 &  20.47 $\pm$ 3.08 \\
       &     & 4.0 &  1.27 $\pm$ 0.05 &    0.60 $\pm$ 0.26 &  20.60 $\pm$ 4.17 \\
       & 50  & 0.0 &  0.00 $\pm$ 0.00 &  103.16 $\pm$ 0.69 &   3.32 $\pm$ 0.07 \\
       &     & 0.5 &  1.03 $\pm$ 0.13 &   33.60 $\pm$ 6.64 &  18.27 $\pm$ 2.50 \\
       &     & 1.0 &  1.37 $\pm$ 0.09 &    5.05 $\pm$ 2.66 &  20.16 $\pm$ 3.05 \\
       &     & 2.0 &  1.46 $\pm$ 0.06 &    0.77 $\pm$ 0.29 &  10.46 $\pm$ 3.77 \\
       &     & 4.0 &  1.23 $\pm$ 0.09 &    0.26 $\pm$ 0.11 &  14.33 $\pm$ 1.97 \\
       & 200 & 0.0 &  0.00 $\pm$ 0.00 &  107.43 $\pm$ 0.26 &   1.84 $\pm$ 0.08 \\
       &     & 0.5 &  1.29 $\pm$ 0.07 &  103.29 $\pm$ 1.38 &   6.75 $\pm$ 0.77 \\
       &     & 1.0 &  1.26 $\pm$ 0.22 &    2.43 $\pm$ 0.30 &   7.30 $\pm$ 4.86 \\
       &     & 2.0 &  1.46 $\pm$ 0.10 &    0.47 $\pm$ 0.15 &  15.39 $\pm$ 1.56 \\
       &     & 4.0 &  1.29 $\pm$ 0.01 &    1.91 $\pm$ 0.57 &  19.66 $\pm$ 3.36 \\
\bottomrule
\end{tabular}

    \caption{{\small\envwalker{} metrics across random and medium-replay variants with varying number of mixed-in trajectories of the expert to satisfy the coverage assumption.}}
    \label{tab:walker-additiona-results}
\end{table}

\begin{table}[htb]
    \centering
    \begin{tabular}{ccc|ccc}
    \toprule
          dataset   &  \# expert mixin   &   $\epsilon$ &     $\mathbb{E} \|\eta_{z_1} - \eta_{z_2}\|$ &            $r$ &       $\mathbb{E} \|\psi_{z_1} - \psi_{z_2}\|$ \\
    \midrule
    medium-replay & 25  & 0.0 &  0.00 $\pm$ 0.00 &   37.64 $\pm$ 0.30 &   3.22 $\pm$ 0.06 \\
           &     & 0.5 &  0.83 $\pm$ 0.12 &   36.95 $\pm$ 0.63 &   3.02 $\pm$ 0.10 \\
           &     & 1.0 &  1.36 $\pm$ 0.09 &   24.30 $\pm$ 6.28 &  13.34 $\pm$ 4.84 \\
           &     & 2.0 &  1.44 $\pm$ 0.06 &    6.73 $\pm$ 3.65 &  22.09 $\pm$ 8.15 \\
           &     & 4.0 &  1.27 $\pm$ 0.09 &    2.68 $\pm$ 0.72 &  21.68 $\pm$ 1.87 \\
           & 50  & 0.0 &  0.01 $\pm$ 0.01 &   45.40 $\pm$ 0.22 &   3.26 $\pm$ 0.27 \\
           &     & 0.5 &  1.14 $\pm$ 0.02 &   42.89 $\pm$ 0.19 &   2.94 $\pm$ 0.12 \\
           &     & 1.0 &  1.41 $\pm$ 0.12 &   37.28 $\pm$ 2.41 &   6.18 $\pm$ 1.21 \\
           &     & 2.0 &  1.32 $\pm$ 0.11 &    8.60 $\pm$ 4.66 &  13.66 $\pm$ 1.97 \\
           &     & 4.0 &  1.24 $\pm$ 0.16 &    1.72 $\pm$ 0.18 &  28.74 $\pm$ 7.84 \\
           & 200 & 0.0 &  0.00 $\pm$ 0.00 &   73.60 $\pm$ 0.39 &   3.65 $\pm$ 0.09 \\
           &     & 0.5 &  1.16 $\pm$ 0.08 &   69.91 $\pm$ 1.14 &   3.67 $\pm$ 0.10 \\
           &     & 1.0 &  1.28 $\pm$ 0.13 &  23.74 $\pm$ 12.94 &  13.47 $\pm$ 1.73 \\
           &     & 2.0 &  1.49 $\pm$ 0.10 &   15.52 $\pm$ 4.29 &  32.03 $\pm$ 0.56 \\
           &     & 4.0 &  1.42 $\pm$ 0.07 &    2.16 $\pm$ 0.04 &  11.92 $\pm$ 2.28 \\
    \midrule
    random & 25  & 0.0 &  0.00 $\pm$ 0.00 &    2.80 $\pm$ 0.36 &   5.55 $\pm$ 1.18 \\
           &     & 0.5 &  1.12 $\pm$ 0.04 &    3.03 $\pm$ 0.28 &   4.30 $\pm$ 0.85 \\
           &     & 1.0 &  1.14 $\pm$ 0.12 &    2.24 $\pm$ 0.09 &  10.45 $\pm$ 3.30 \\
           &     & 2.0 &  1.24 $\pm$ 0.08 &    1.73 $\pm$ 0.33 &  25.01 $\pm$ 8.78 \\
           &     & 4.0 &  1.44 $\pm$ 0.03 &    1.60 $\pm$ 0.30 &  35.08 $\pm$ 8.27 \\
           & 50  & 0.0 &  0.00 $\pm$ 0.00 &   31.89 $\pm$ 1.14 &   9.97 $\pm$ 0.58 \\
           &     & 0.5 &  1.14 $\pm$ 0.11 &   10.29 $\pm$ 3.13 &  17.90 $\pm$ 6.01 \\
           &     & 1.0 &  1.42 $\pm$ 0.15 &    6.45 $\pm$ 2.95 &  23.30 $\pm$ 0.96 \\
           &     & 2.0 &  1.41 $\pm$ 0.08 &    2.73 $\pm$ 0.43 &  23.91 $\pm$ 6.98 \\
           &     & 4.0 &  1.68 $\pm$ 0.06 &    1.44 $\pm$ 0.27 &  35.07 $\pm$ 8.08 \\
           & 200 & 0.0 &  0.00 $\pm$ 0.00 &   68.35 $\pm$ 1.25 &   5.20 $\pm$ 0.31 \\
           &     & 0.5 &  1.30 $\pm$ 0.08 &  50.85 $\pm$ 17.30 &   9.80 $\pm$ 3.68 \\
           &     & 1.0 &  1.21 $\pm$ 0.12 &   15.06 $\pm$ 5.58 &  29.57 $\pm$ 4.26 \\
           &     & 2.0 &  1.03 $\pm$ 0.10 &    2.10 $\pm$ 1.99 &  10.84 $\pm$ 7.57 \\
           &     & 4.0 &  1.20 $\pm$ 0.20 &    2.16 $\pm$ 0.05 &  16.90 $\pm$ 5.95 \\
    \bottomrule
    \end{tabular}
    \caption{{\small\envhalfcheetah{} metrics across random and medium-replay variants with varying number of mixed-in trajectories of the expert to satisfy the coverage assumption.}}
    \label{tab:halcheetah-additional-results}
\end{table}

\end{document}